\theoremstyle{plain} 
\newcommand{\thistheoremname}{}
\newtheorem*{genericthm*}{\thistheoremname}
\newenvironment{namedthm*}[1]
  {\renewcommand{\thistheoremname}{#1}
  \begin{genericthm*}}
  {\end{genericthm*}}
\newcommand{\numPar}{{\mathcal P}_Y}
\newcommand{\acro}[1]{\textsc{#1}\xspace}
\DeclareBoldMathCommand{\E}{E}
\DeclareBoldMathCommand{\x}{x}
\DeclareBoldMathCommand{\X}{X}
\DeclareBoldMathCommand{\Pa}{Pa}
\DeclareBoldMathCommand{\v}{v}
\DeclareBoldMathCommand{\V}{V}
\newcommand{\Reals}{\mathbb R}
\newcommand{\df}{\coloneqq}
\newcommand{\prob}{\mathbb{P}}
\newcommand{\logf}[2]{\log\left(\frac{#1}{#2}\right)}
\newcommand\independent{\protect\mathpalette{\protect\independenT}{\perp}} 
\def\independenT#1#2{\mathrel{\rlap{$#1#2$}\mkern2mu{#1#2}}}
\newcommand{\indicator}[1]{\mathds{1}\left\{#1\right\}}
\newcommand{\mnorm}[2]{\left\Vert #1 \right\Vert_{{#2}^{\dagger}}}
\newcommand{\calA}{\mathcal A}
\newcommand{\calN}{\mathcal N}
\newcommand{\calE}{\mathcal E}
\newcommand{\calS}{\mathcal S}
\newcommand{\calP}{\mathcal P}
\newcommand{\ex}{\mathbb{E}}
\newcommand{\graph}{\mathcal G}
\newcommand{\pa}{\text{pa}}
\newcommand{\doi}{\text{do}}
\newcommand{\CBUG}{\acro{CBUG}}
\newcommand{\aCBUG}{a\acro{CBUG}}
\newcommand{\MODL}{\acro{MODL}}
\DeclareMathOperator*{\diag}{diag}
\DeclareMathOperator*{\supp}{supp}
\DeclareBoldMathCommand{\ones}{1}
\renewcommand{\eqref}[1]{Eq. (\ref{#1})}
\newcommand{\cond}{\,|\,}
\theoremstyle{plain}
\newtheorem{theorem}{Theorem}[section]
\newtheorem{fact}{Fact}[section]
\newtheorem{lemma}[theorem]{Lemma}
\theoremstyle{definition}
\newtheorem{assumption}[theorem]{Assumption}
\theoremstyle{remark}
\newtheorem{remark}[theorem]{Remark}
\icmltitlerunning{Additive Causal Bandits with Unknown Graph}
\begin{document}

\twocolumn[
\icmltitle{Additive Causal Bandits with Unknown Graph}

\begin{icmlauthorlist}
\icmlauthor{Alan Malek}{dm}
\icmlauthor{Virginia Aglietti}{dm}
\icmlauthor{Silvia Chiappa}{dm}
\end{icmlauthorlist}

\icmlaffiliation{dm}{DeepMind, London, UK}

\icmlcorrespondingauthor{Alan Malek}{alanmalek@deepmind.com}

\icmlkeywords{Causality, Bandits, Causal Bandits, Causal Inference, Causal Discovery, Linear Bandits, Combinatorial Bandits}

\vskip 0.3in
]

\printAffiliationsAndNotice{}

\begin{abstract}
We explore algorithms to select actions in the causal bandit setting where the learner can choose to intervene on a set of random variables related by a causal graph, and the learner sequentially chooses interventions and observes a sample from the interventional distribution. The learner's goal is to quickly find the intervention, among all interventions on observable variables, that maximizes the expectation of an outcome variable. We depart from previous literature by assuming no knowledge of the causal graph except that latent confounders between the outcome and its ancestors are not present. We first show that the unknown graph problem can be exponentially hard in the parents of the outcome. To remedy this, we adopt an additional additive assumption on the outcome which allows us to solve the problem by casting it as an additive combinatorial linear bandit problem with full-bandit feedback. We propose a novel action-elimination algorithm for this setting, show how to apply this algorithm to the causal bandit problem, provide sample complexity bounds, and empirically validate our findings on a suite of randomly generated causal models, effectively showing that one does not need to explicitly learn the parents of the outcome to identify the best intervention.
\end{abstract}

\section{Introduction}
What setting of our factory production system should we choose to maximize efficiency? Which nutrients would induce maximal crop yield increase? What combination of drugs and dosages would optimize patients outcomes? 
All these questions ask which variables and values would optimize the \emph{causal effect} on an outcome $Y$. In a system of variables $\X_{[K]}=\{X_1,\ldots,X_K\}$ and $Y$, these questions can be phrased as asking which set $\X\subseteq \X_{[K]}$ and values $\x\in\supp(\X)$ would produce an optimal outcome under the \emph{intervention} $\doi(\X=\x)$, whose effect is to alter the \emph{observational distribution} $p(\X_{[K]}, Y)$ describing the existing relationships between $\X_{[K]} \cup Y$ by setting $\X$ to the fixed value $\x$. Indicating with $p(Y\cond\doi(\X=\x))$ the distribution of $Y$ under such an intervention, answering these questions is equivalent to solving the optimization problem $\max_{\X\subseteq \X_{[K]}, \x\in\supp(\X))}\ex[Y\cond\doi(\X=\x)]$.

The causal bandit problem, proposed in \citet{lattimore2016causal}, is an extension of the multi-armed bandit problem to the setting where many variables can be intervened on and a causal graph $\graph$ is used to describe the causal relationships among $\X_{[K]}\cup Y$.  The learner solves this optimization problem by repeatedly choosing an intervention $(\X,\x)$, also called an \emph{action}, and observing a sample from $p(Y\cond\doi(\X=\x))$. A na\"ive approach to the problem would be to treat each of the combinatorially many actions as independent and run a typical bandit algorithm. Instead, the causal graph enables us to reason about acting on sub-parts of the system and exploit the causal structure to reduce the action set $\calA$. For example, consider a system of variables $X_1$, $X_2$, and $Y$ with causal graph $X_1\rightarrow X_2\rightarrow Y$. Fact~\ref{fact1} below tells us that intervening on $\pa(Y)$, the \emph{parents} of $Y$ (or \emph{direct causes}, i.e.\ the variables with an edge into $Y$), can always produce an expectation as high as the best intervention on any other set. As $\pa(Y) = \{X_2\}$, this means that the optimization problem can simplified to $\max_{\x_2\in\supp(X_2)}\ex[Y\cond\doi(\X_2=\x_2)]$.

Causal bandits have been explored under various assumptions on $\graph$, $\calA$, and the interventional distribution. \citet{lattimore2016causal} assumed full knowledge of $\graph$ and of the distribution of $\pa(Y)$ under any intervention in $\mathcal A$ and proposed an algorithm that selects all actions at the start to minimize a lower bound on sample complexity. \citet{lu2020regret} proposed a UCB algorithm for the cumulative regret setting which exploits the observation that one does not need a confidence bound on individual actions but on how the actions affect $\pa(Y)$. \citet{lu2021causal} was the first to considered an unspecified graph and instead considered the special case of a tree $\graph$ with a singleton $\pa(Y)$, which allowed $\pa(Y)$ to be identified via binary search. \citet{de2022causal} did not place constraints on the structure of $\graph$ and instead used causal discovery algorithms to learn a \emph{separating set} (i.e.\ a set that d-separates variables we may intervene on from $Y$) which thereby allowed them to decompose the problem into learning the effect of the actions on the separating set and learning the distribution of $Y$ conditioned on the separating set. This algorithm can be viewed as a generalization of previous causal bandit algorithms which effectively use $\pa(Y)$ as the separating set. \citet{bilodeau2022adaptively} developed an algorithm that performs optimally when a given set is a separating set but fell back to a normal bandit algorithm otherwise. With the exception of \citet{bilodeau2022adaptively}, \citet{de2022causal}, and \citet{maiti2022causal}, these works assumed no \emph{latent confounders} (i.e.\ latent common causes), and many assumed that the learner could make arbitrary interventions on all variables excluding $Y$. \citet{xiong2023combinatorial} were the first to consider the PAC (i.e.\ the fixed-confidence) setting and, similar to this work, provides instance-dependent PAC bounds. They study the known-graph case (potentially with unobserved variables), whereas we focus on the unknown-graph case with the additional assumptions that (i) there are no unobserved confounders between $Y$ and its \emph{ancestors}, and (ii) we may intervene on all variables in $\X_{[K]}$ (which is a common assumption in the causal bandit literature, see \citep{lee2018structural, lu2020regret}).

We refer to this problem as the \emph{causal bandit with an unknown graph} (\CBUG) problem. While the first assumption is self-explanatory and the third assumption common in the literature, our second assumption is a relaxation of the typical no-latent-confounders assumption and natural in several situations. For example, if one has a list that contains the parents of $Y$ (e.g.\ built from domain experts or a noisy causal discovery algorithm), our assumption is still satisfied for arbitrary joint distributions on these variable (even with latent confounding between them and between other variables). Finally, this confounding assumption implies that the optimal intervention set is $\pa(Y)$, as demonstrated in \citet[Proposition~2]{lee2018structural}.
\begin{fact}\label{fact1}
If there are no latent confounders between $Y$ and any of its ancestors, then
\begin{align*}
    \lefteqn{
    \max_{\X\subseteq \X_{[K]}, \x\in \supp(\X)}\ex[Y\cond\doi(\X=\x)]}\\
    &\quad\leq
    \max_{\x'\in \supp(\pa(Y))}\ex[Y\cond\doi(\pa(Y)=\x')].
\end{align*}
\end{fact}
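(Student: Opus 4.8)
The plan is to exploit the fact that any intervention influences $Y$ only through the distribution it induces on $Y$'s parents. Working in the underlying structural causal model, I would write $Y = f_Y(\pa(Y), U_Y)$ for a deterministic mechanism $f_Y$ and exogenous noise $U_Y$. The no-latent-confounder assumption between $Y$ and its ancestors guarantees that $U_Y$ is independent of $\pa(Y)$ under any intervention not targeting $Y$, since $\pa(Y)$ is a function of exogenous sources that exclude $U_Y$; this is the structural content I will need. The high-level argument is then that $\ex[Y\cond\doi(\X=\x)]$ is a weighted average of the quantities $\ex[Y\cond\doi(\pa(Y)=\x')]$, and an average never exceeds a maximum.

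Concretely, I would proceed in three steps. First, for an arbitrary action $(\X,\x)$, I condition on the value attained by $\pa(Y)$ and write
\begin{align*}
\ex[Y\cond\doi(\X=\x)] &= \sum_{\x'\in\supp(\pa(Y))} \ex[Y\cond\pa(Y)=\x',\doi(\X=\x)] \\
&\quad\times\, p(\pa(Y)=\x'\cond\doi(\X=\x)).
\end{align*}
Second, I establish the central identity $\ex[Y\cond\pa(Y)=\x',\doi(\X=\x)] = \ex[Y\cond\doi(\pa(Y)=\x')]$: once the parents are observed to equal $\x'$, the residual randomness in $Y$ comes only from $U_Y$, whose law is unaffected by the intervention. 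Substituting this turns the right-hand side into a convex combination of the terms $\ex[Y\cond\doi(\pa(Y)=\x')]$ with weights $p(\pa(Y)=\x'\cond\doi(\X=\x))$ that sum to one. Third, I bound this convex combination by $\max_{\x'\in\supp(\pa(Y))}\ex[Y\cond\doi(\pa(Y)=\x')]$ and finally take the maximum over all $(\X,\x)$ on the left, yielding the claim.

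The main obstacle is the second step, since this is precisely where the confounding assumption must be used. The subtlety is that conditioning on $\pa(Y)=\x'$ generally differs from intervening via $\doi(\pa(Y)=\x')$ whenever a latent common cause opens a backdoor path between $\pa(Y)$ and $Y$; ruling out confounders between $Y$ and its ancestors closes exactly these paths, so conditioning and intervening coincide. I would make this rigorous either through the truncated factorization (g-formula) applied to $Y$, or directly via the structural equation by noting that $U_Y$ independent of $\pa(Y)$ makes $\ex[f_Y(\x',U_Y)]$ independent of how $\pa(Y)$ came to equal $\x'$. A minor point to verify is that actions $\X$ overlapping $\pa(Y)$ (fixing some parents while others remain random) are handled automatically, since the decomposition conditions on the full parent vector.
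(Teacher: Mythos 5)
Your proof is correct, but it follows a route the paper never writes down: the paper does not prove Fact~\ref{fact1} at all, instead deferring to Proposition~2 of \citet{lee2018structural}, and the only argument it sketches in this neighborhood is for Fact~\ref{fact2}, via rule 3 of do-calculus applied to the graph with all edges into $\X_{[K]}$ removed. Your argument is the standard self-contained one: write $Y=f_Y(\pa(Y),U_Y)$, decompose $\ex[Y\cond\doi(\X=\x)]$ as a mixture of the terms $\ex[Y\cond\pa(Y)=\x',\doi(\X=\x)]$ weighted by the post-intervention law of $\pa(Y)$, use the assumption to show that conditioning on the parents coincides with intervening on them, and bound the mixture by its maximum. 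The crucial step is exactly where you locate it: absence of latent confounders between $Y$ and its ancestors makes $U_Y$ independent of $\pa(Y)$ under any intervention on a subset of $\X_{[K]}$ (post-intervention, $\pa(Y)$ is a function of noises attached to ancestors of $Y$ --- acyclicity prevents descendants of $Y$ from feeding into its parents --- together with intervened constants), so $\ex[f_Y(\x',U_Y)]$ does not depend on how the parents came to equal $\x'$. Your version buys self-containedness and makes explicit that confounding among the $X_k$ themselves is harmless, since it only distorts the mixture weights; the paper's citation buys brevity, and its do-calculus treatment of Fact~\ref{fact2} is essentially your step two specialized to global interventions. Your closing remark that actions overlapping $\pa(Y)$ are handled automatically is also right: parent values inconsistent with the intervention simply receive zero weight in the mixture.
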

This fact suggests that we can solve the \CBUG problem by first learning $\pa(Y)$ then searching for the optimal values in $\supp(\pa(Y))$; we refer to this approach as \emph{parents-first}. Since a \emph{global intervention} $\doi(\X_{[K]}=\x)$ cuts all incoming edges into $\X_{[K]}$ in $\graph$ such that only the edges from $\pa(Y)$ to $Y$ remain, it suffices to use global interventions and avoid learning the parents.
\begin{fact}\label{fact2}
Under the conditions of Fact \ref{fact1}, 
\[
\ex[Y\cond\doi(\X_{[K]}=\x)]=\ex[Y\cond\doi(\pa(Y)=\x')],
\]
where $\x' = \x \cap \supp(\pa(Y))$ are the values of $\x$ limited to $\pa(Y)$.
\end{fact}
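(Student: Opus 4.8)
The plan is to argue directly at the level of the structural causal model and its mutilated graphs, showing that under either intervention the outcome $Y$ collapses to the same function of $\pa(Y)$ and its own exogenous noise. Write the structural assignment for the outcome as $Y = f_Y(\pa(Y), \epsilon_Y)$, where $\epsilon_Y$ is the exogenous noise of $Y$. The hypothesis of Fact~\ref{fact1} (no latent confounders between $Y$ and its ancestors) plays two roles here: it ensures that the parents of $Y$ are themselves observable variables in $\X_{[K]}$, so that $\doi(\pa(Y)=\x')$ is well defined and $\x' = \x \cap \supp(\pa(Y))$ is a complete specification of the parent values; and it ensures that $\epsilon_Y$ is independent of the exogenous noises of all ancestors of $Y$, so that the free (non-intervened) variables carry no information about $\epsilon_Y$.

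First I would treat the global intervention. Because $\doi(\X_{[K]}=\x)$ fixes every observable variable, and in particular clamps every coordinate of $\pa(Y)\subseteq\X_{[K]}$ to its value in $\x$, the parents of $Y$ are set deterministically to $\x'$; since interventions do not alter the law of exogenous noise, $Y\cond\doi(\X_{[K]}=\x)$ is distributed as $f_Y(\x', \epsilon_Y)$ with $\epsilon_Y$ drawn from its original marginal. This is exactly the mutilated-graph picture noted before the statement: only the edges $\pa(Y)\to Y$ survive, so the law of $Y$ depends on $\x$ only through $\x'$. Next I would treat the parent intervention. Under $\doi(\pa(Y)=\x')$ the parents are again fixed to $\x'$ while the remaining variables retain their natural mechanisms, but none of these free variables enters $f_Y$: the only arguments of $Y$'s structural assignment are $\pa(Y)$, now clamped to $\x'$, and $\epsilon_Y$, whose marginal is unchanged. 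Hence $Y\cond\doi(\pa(Y)=\x')$ is also distributed as $f_Y(\x', \epsilon_Y)$, and comparing the two expressions and taking expectations yields the claimed equality.

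The step I expect to be the main obstacle is making rigorous the claim that, under $\doi(\pa(Y)=\x')$, marginalizing over the free ancestors leaves the law of $Y$ equal to that of $f_Y(\x', \epsilon_Y)$ under the \emph{marginal} of $\epsilon_Y$. This is precisely the step that would fail if $\epsilon_Y$ were correlated with an ancestor's noise, and it is where the no-confounder hypothesis is indispensable; without it the free variables would be informative about $\epsilon_Y$ and the two interventional laws could diverge.

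A clean way to discharge this obstacle is via the truncated factorization (g-formula), which the no-confounder assumption licenses. For the global intervention only $Y$ remains un-clamped, giving $p(y\cond\doi(\X_{[K]}=\x)) = p(y\cond\pa(Y)=\x')$. For the parent intervention, summing the truncated product over all non-parent variables leaves the single factor $p(y\cond\pa(Y)=\x')$, since $Y$'s factor depends on none of the summed-out variables and the remaining factors constitute a normalized distribution. Both interventional marginals thus collapse to the same observational conditional $p(y\cond\pa(Y)=\x')$, and equality of the expectations of $Y$ follows immediately.
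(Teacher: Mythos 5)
Your proof is correct, but it takes a genuinely different route from the paper's. The paper dispatches Fact~\ref{fact2} with a single invocation of rule 3 of do-calculus: in the mutilated graph ${\cal G}_{\bar \X_{[K]}}$ every edge into $\X_{[K]}$ is removed, so $Y\independent_{{\cal G}_{\bar \X_{[K]}}} \X_{[K]}\backslash\pa(Y)\cond\pa(Y)$, and the interventions on the non-parents may therefore be deleted, yielding $\ex[Y\cond\doi(\X_{[K]}=\x)]=\ex[Y\cond\doi(\pa(Y)=\x')]$. You instead argue directly on the structural assignments: under either intervention, $Y$'s mechanism reads only the clamped parent values $\x'$ and $Y$'s own unobserved inputs, whose marginal law no intervention on observables can alter, so both interventional laws equal that of $f_Y(\x',\epsilon_Y)$. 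Your route is more elementary and self-contained, and it exposes the mechanism of the result; the paper's is a one-liner given the do-calculus machinery. Two remarks. First, your g-formula fallback is the weak link: the truncated factorization as you write it presumes a fully Markovian model (no unobserved confounders anywhere), whereas the paper's assumption explicitly tolerates latent confounding other than between $Y$ and its ancestors, and the conditional $p(y\cond\pa(Y)=\x')$ additionally needs a positivity condition to be well defined; your SCM argument needs neither, so it, not the g-formula, should carry the proof. Second, your diagnosis that the no-confounder hypothesis is ``indispensable'' for the marginalization step overshoots: once $\pa(Y)$ is clamped to constants, the marginal law of $Y$ depends only on the marginal law of its unobserved inputs, which interventions on observables never perturb, so the equality of Fact~\ref{fact2} would survive correlation between $\epsilon_Y$ and ancestor noises --- it is Fact~\ref{fact1} (optimality of intervening on the parents) that genuinely needs the assumption. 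Neither remark invalidates your argument under the stated hypotheses.
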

Using the two above facts, the \CBUG problem can be recast as a regression problem over $\supp(\X_{[K]})$. In other words, when there are no latent confounders between $Y$ and any of its ancestors and when we may intervene on all variables, the optimal intervention may be found by a global intervention which circumvents the need to know the causal graph. To our surprise, this observation does not seem to have been used to design algorithms before. 

Unfortunately, even with this simplification, the \CBUG problem can be intractable. In Section~\ref{sec:prelim}, we show that, for any algorithm, there exists a problem instance with sample complexity exponential in the size of $\pa(Y)$; therefore, we need additional structural assumptions to make the problem tractable. Turning towards the causal inference literature, we see that the most common structural assumption is that the outcome is a noisy additive function of its parents, the discrete analog of the linear assumption. It has been developed into its own theory \cite{hastie2017generalized} and is used throughout social and biomedical sciences: see e.g.\ \citet[Chapter~13]{imbens2015causal} and \citet{buhlmann2014cam}), and more recently \citet{maeda2021causal} for examples. This assumption leads us to define the \emph{additive} \CBUG (\aCBUG) problem, where we additionally assume that $Y$ is an additive function of $\pa(Y)$ plus a random term. We emphasize that additive outcome settings are of significant interest to the causal inference community.

As developed in Section~\ref{sec:linear.bandits}, the key implication of the additive assumption is that the \aCBUG problem can be recast as a linear bandit problem (\citet[Chapter~19]{lattimore2020bandit} provides a good introduction) where the action set is combinatorial and we only have full-bandit feedback, meaning that we never observe the effect of individual variables on the outcome and only observe a single sample of the outcome from $p(Y\cond\doi(\X=\x))$.

By considering the specific problem of \aCBUG, we have naturally arrived at the \emph{additive combinatorial linear bandit problem with full-bandit feedback} problem. To the best of our knowledge, we are the first to consider this problem, which extends previous causal bandit settings in two ways: (1) the action set is combinatorial, whereas most prior pure-exploration linear bandits cannot scale to combinatorial actions, and (2) we only have full-bandit feedback, meaning that we can never observe the individual additive components of $Y$ and must infer them from only their sum.

Existing pure-exploration linear bandit algorithms either have complexities that scale with the number of actions or cannot exploit the structure of the problem; hence, in Section~\ref{sec:additive}, we propose a novel action-elimination algorithm that alternates between selecting actions that approximately solve an optimal design problem with decreasing tolerances and using the resulting observations to eliminate suboptimal actions. Noting that storing a combinatorial action set and solving optimal design problems are generally intractable, we solve both computational challenges by restricting $\calA$ to \emph{marginal} action sets that decomposes over variables, which also allows for an easy approximation of the optimal design problem. We name our algorithm \emph{marginal optimal design linear bandit} (\MODL). We analyze the algorithm in the PAC setting and provide one of the first instance-dependent analysis of the sample complexity (with \citet{xiong2023combinatorial} being the only other, to be best of our knowledge). Finally, in Section~\ref{sec:experiments}, we show that \MODL performs well for \aCBUG problems and, in particular, significantly outperforms the parents-first approach while being only slightly behind an oracle version of the algorithm that knows $\pa(Y)$. 

\section{Additive Causal Bandits with Unknown Graphs (\aCBUG)}\label{sec:prelim}
This section gives a formal definition of the \CBUG problem, presents a lower bound showing that any algorithm for solving this problem must have exponential dependence on the number of parents, and introduces the additive outcome assumption.

\textbf{Notation.} 
We refer to sets of variables or values using bold face, e.g. $\X$ and $\x$, respectively. We use a subscript $k$ to indicate variable number, superscripts $i$ or $j$ to indicate a discrete value, and superscripts $n$ or $t$ to indicate sample or round number. For example, $\x_k^t$ is the value of $\X_k$ for the $t$th sample, and $\theta_k^i$ is a parameter corresponding to $X_k$'s $i$th value. Finally, $[n] \df \{1,\ldots, n\}$ and $\x^{[n]}$ indicates a sequence of sets of values.

\subsection{\CBUG Problem Formulation}

We assume a system formed by random variables $\V=\X_{[K]}\cup Y$, where $\X_{[K]}=\{X_1,\ldots,X_K\}$, $\supp(X_k)= \{1,\ldots, M_k\}$ (i.e.\ each $X_k$ has finite integer support), and $Y$ is an outcome of interest that can be real-valued or discrete.
The variables are causally related by an acyclic causal graph $\graph$ with associated \emph{observational distribution} $p(\V)$. 
The learner acts by selecting a set $\X\subseteq \X_{[K]}$ and values $\x\in\supp(\X)$ and performing the  \emph{intervention} $\doi(\X=\x)$, which  corresponds to replacing $p(\V)$ with the \emph{interventional distribution} $p(\V\backslash \X\cond\text{do}(\X=\x))$ resulting from removing all incoming edges into $\X$ from $\graph$ and fixing the value of $\X$ to $\x$. When there are no \emph{unobserved confounders} (i.e.\ a latent common cause between variables in $\V$, usually represented by a bidirected edge in $\graph$),  $p(\V) = p(Y\cond\pa(Y)) \prod_{k=1}^K p(X_k\cond\pa(X_k))$, and the interventional distribution can be expressed as
$p(\V\backslash \X\cond\text{do}(\X=\x))=p(Y\cond\pa(Y))\prod_{k=1 \text{s.t.} X_k\not\in \X}^K p(X_k\cond\pa(X_k))\delta_{\X=\x}$ with $\delta_{\X=\x}$ a delta function centered at $\x$.

The learner's goal in causal bandits is to find the set $\X$ and values $\x$ which result in the greatest expectation of $Y$ under the interventional distribution, denoted $\ex[Y\cond\doi(\X=\x)]$. The learner accomplishes this task by interacting with the environment sequentially, choosing, at every round $t$, 
an intervention $(\X^t,\x^t)$ (also called an \emph{action}) and obtaining a sample from $p(Y\cond\doi(\X^t=\x^t))$. 
Without loss of generality, we assume $\pa(Y) = \{X_1,\ldots, X_{\numPar}\}$ where ${\mathcal P}_Y$ is the number of parents of $Y$ (of course the learner does not know this ordering).

We consider the causal bandit problem in the setting where $\graph$ is unknown and the learner must find, for $\epsilon>0$ and $\delta\in(0,1)$, an $(\epsilon, \delta)$-PAC solution $(\hat\X,\hat\x)$  satisfying
$
    \prob\Big(\max_{\X,\x} \ex[Y\cond\doi(\X=\x)] - \ex[Y\cond\doi(\hat\X=\hat\x)] \leq \epsilon\Big) \notag\\
    \quad\geq 1-\delta.
$
in as few rounds (or samples, we use the two interchangeably) as possible. We refer to this problem as the \emph{causal bandit with an unknown graph} (\CBUG) problem. 

\subsection{Global Intervention Approach} Recall that the \CBUG problem assumes no latent confounders between $Y$ and any of its \emph{ancestors} (i.e.\ variables with a \emph{directed (or causal) path} into $Y$) and that we can simultaneously intervene on all observable variables except for $Y$, i.e.\ we can make \emph{global interventions}. A discussed previously, global interventions are a common model in the literature. They also model settings where statistical units are expensive relative to the cost of intervening on additional variables for a single unit, which implies that the sample complexity (the number of units used) is the key quantity to minimize, not the number of variables intervened on.

As stated in Fact \ref{fact1}, these two assumptions imply that $\pa(Y)$ is the optimal intervention set. Thus, a natural solution to solving the \CBUG problem would be to first find $\pa(Y)$ and then search for the optimal value in $\supp(\pa(Y))$. Instead, we make the key observation that a \emph{global intervention} $\doi(\X_{[K]}=\x)$  cuts all incoming edges into $\X_{[K]}$ in $\graph$, leaving only the edges from $\pa(Y)$ to $Y$, which implies that performing a global intervention is equivalent to intervening on $\pa(Y)$; precisely, $\ex[Y\cond\doi(\X_{[K]}=\x)]=\ex[Y\cond\doi(\pa(Y)=\x')]$, where $\x'$ are the values of $\x$ for $\pa(Y)$ (recall Fact~\ref{fact2} above). This claim can be proved by invoking rule 3 of do-calculus \cite{pearl2000causality} which, in this case, states that $\ex[Y\cond\doi(\X_{[K]})]=\ex[Y\cond\doi(\pa(Y))]$ since $Y\independent_{{\cal G}_{\bar \X_{[K]}}} \X_{[K]}\backslash \pa(Y)\cond\pa(Y)$, where ${\cal G}_{\bar \X_{[K]}}$ is the graph $\graph$ with all incoming edges into $\X_{[K]}$ removed. We can therefore restrict the problem to finding $\hat x\in \supp(\X_{[K]})$ that satisfies 
$\prob\Big(\max_{\x\in \supp(\X_{[K]})} \ex[Y\cond\doi(\X_{[K]}=\x)] - \ex[Y\cond\doi(\X_{[K]}=\hat\x)] \leq \epsilon\Big) \geq 1-\delta$, meaning that learning the parents, in some sense, is optional. 

\subsection{The \CBUG Problem is Exponentially Hard}
While using global interventions saves us from having to consider all intervention sets in the powerset of $\X_{[K]}$, the \CBUG problem can be exponentially hard in ${\mathcal P}_Y$.
For a fixed $\delta$ and $\epsilon$, let $\x^*\in\supp(\pa(Y))$ be fixed and unknown, and let  $\ex[Y\cond\doi(\pa(Y)=\x)] = 0 + \epsilon \indicator{\x=\x^*}$: the expectation is flat except at a single value $\x^*$ where it is equal to $\epsilon$. We can strengthen the example by choosing $p$ such that, for any intervention $\doi(\X'=\x')$ with $\x'\supseteq \x^*$ (indicating that $\x'$ agrees with $\x^*$ in $\supp(\pa(Y))$) we have $p(\X_k=\x_k^*\cond\doi(\X'=\x'))=0$ for all $\X_k\notin \X'$. Only interventions with $\x'\supseteq \x^*$ can provide information about $\x^*$, so this problem is difficult as the learner has to try actions blindly until one containing $\x^*$ is found. We assume that $Y\cond\doi(\X=\x)$ is 1-sub-Gaussian for any intervention.

Obtaining an upper bound on this problem is easy. Consider the algorithm that picks a ordering all values $\x^1, \x^2\ldots$ in $\supp(\X_{[K]})$ uniformly at random. Beginning at $t=1$, it collects $O\left(\frac{\sum_k \log(M_k)}{\epsilon^2}\logf{1}{\delta}\right)$ samples from $p(Y\cond\doi(\X_{[K]}=\x^t)$ and tests $\ex[Y\cond\doi(\X=\x)] \geq \epsilon$ against the null hypothesis $\ex[Y\cond\doi(\X=\x)] = 0$. If the null hypothesis is rejected, then $\x^t$ is optimal and the algorithm terminates; otherwise, the algorithm moves on to $t+1$. The sample complexity is $O\left(\left|\supp(\pa(Y))\right|\frac{\sum_k \log(M_k)}{\epsilon^2}\logf{1}{\delta}\right)$: since $\prob(\x^*\in \x^t) = 1/|\supp(\pa(Y))|$, we have to test, on average, $O\left(|\supp(\pa(Y))|\right)$ actions before stumbling upon one containing $\x^*$. This na\"ive algorithm matches the following lower bound with proof in Appendix~\ref{appendix:proofs}.
\begin{theorem}\label{thm:lower.bound}
There is an instance of the CBUG problem such that any $(\epsilon,\delta)$-PAC algorithm must take
$\Omega\left(
\frac{\left|\supp(\pa(Y))\right|}{\epsilon^2}\logf{1}{\delta}\right)$
samples in expectation.
\end{theorem}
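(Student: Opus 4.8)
The plan is to recognize the described instance as a ``needle in a haystack'' over the $N \df |\supp(\pa(Y))|$ parent configurations and to prove the bound with a standard change-of-measure argument. I would first take the needle height to be a constant multiple of $\epsilon$, say $\mu = 2\epsilon$, so that in the instance with the needle at configuration $a$ the gap between the optimum and every other value is $2\epsilon$; this guarantees that the \emph{only} $\epsilon$-optimal action is the needle itself, which forces an $(\epsilon,\delta)$-PAC learner to actually identify it (with height exactly $\epsilon$ every action would be $\epsilon$-optimal and no lower bound could hold). Concretely, for each $a\in[N]$ let $\nu^{(a)}$ be the instance with $\ex[Y\cond\doi(\pa(Y)=\x)] = \mu\,\indicator{\x = a}$ and $Y$ equal to unit-variance Gaussian noise (which is $1$-sub-Gaussian) about this mean, and let $\nu^{(0)}$ be the reference instance in which $Y$ has mean $0$ under every intervention. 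Each of these is a valid \CBUG instance, and I would ultimately prove the bound for $\nu^{(0)}$.

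The key structural step, and the part that uses the specific construction, is to show that the only actions carrying information about the needle location are the global interventions. By Fact~\ref{fact2} a global intervention $\doi(\X_{[K]}=\x)$ tests exactly the parent configuration $\x\cap\supp(\pa(Y))$ and returns mean $\mu$ precisely when this equals $a$. For every other action --- any partial intervention, or a global intervention whose restriction to $\pa(Y)$ differs from $a$ --- the chosen $p$ drives the non-intervened parents away from the needle values (the condition $p(\X_k=\x^*_k\cond\doi(\X'=\x'))=0$), so the outcome has mean $0$ in every $\nu^{(a)}$ and in $\nu^{(0)}$ alike. Grouping actions by the parent configuration they test, let $N_a$ count the rounds spent on the full-parent interventions testing configuration $a$; then $\sum_a N_a \le T$, and $\nu^{(0)}$ and $\nu^{(a)}$ differ only on the arms counted by $N_a$, where the per-sample divergence is $\mathrm{KL}(\mathcal N(0,1)\,\|\,\mathcal N(\mu,1)) = \mu^2/2$.

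I would then apply the transportation (change-of-measure) inequality of Kaufmann et al. For the event $\mathcal E_a = \{\text{the learner outputs configuration } a\}$ and the pair $(\nu^{(0)},\nu^{(a)})$ it gives
\[
\ex_{\nu^{(0)}}[N_a]\,\frac{\mu^2}{2} \;\ge\; \mathrm{kl}\big(\prob_{\nu^{(0)}}(\mathcal E_a),\, \prob_{\nu^{(a)}}(\mathcal E_a)\big),
\]
where $\mathrm{kl}$ is the binary relative entropy. The PAC guarantee on $\nu^{(a)}$ gives $\prob_{\nu^{(a)}}(\mathcal E_a)\ge 1-\delta$, while the outputs are disjoint across $a$, so $\sum_a \prob_{\nu^{(0)}}(\mathcal E_a)\le 1$ and hence at least $N-1$ of the values $p_a \df \prob_{\nu^{(0)}}(\mathcal E_a)$ satisfy $p_a\le 1/2$. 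Using monotonicity of $\mathrm{kl}$ and the standard estimate $\mathrm{kl}(1/2,1-\delta)=\Omega(\logf{1}{\delta})$, summing the displayed inequality over $a$ yields
\[
\ex_{\nu^{(0)}}[T]\;\ge\;\sum_a \ex_{\nu^{(0)}}[N_a]\;\ge\;\frac{2}{\mu^2}\,(N-1)\,\Omega\!\left(\logf{1}{\delta}\right),
\]
which is $\Omega\big(\tfrac{N}{\epsilon^2}\logf{1}{\delta}\big)$ after substituting $\mu=2\epsilon$ and $N=|\supp(\pa(Y))|$.

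I expect the main obstacle to be the structural reduction rather than the information-theoretic computation: one must argue carefully that the combinatorial, causal action set collapses --- information-theoretically --- to a clean $N$-armed instance, i.e.\ that every action other than the $N$ needle-testing global interventions is distributed identically across all of $\{\nu^{(a)}\}$ and $\nu^{(0)}$ and therefore contributes nothing to the KL decomposition. This is exactly where the confounding construction ($p(\X_k=\x^*_k\cond\doi(\X'=\x'))=0$) and Fact~\ref{fact2} are essential. Once this collapse is established, the remaining steps are the textbook needle-in-a-haystack lower bound.
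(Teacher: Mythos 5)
Your proposal is correct, and at its core it is the same argument as the paper's: a change-of-measure lower bound over the needle-in-a-haystack family built from the paper's hard construction, with the confounding condition $p(\X_k=\x_k^*\cond\doi(\X'=\x'))=0$ guaranteeing that partial interventions are uninformative. The packaging, however, differs. The paper instantiates Bernoulli needle instances indexed by $\x_\pa\in\supp(\pa(Y))$ and invokes the packaged characteristic-time theorem (Theorem~33.5 of the Lattimore--Szepesv\'ari text), reducing the work to computing the max-min allocation $c(\nu)=O\left(\epsilon^2/\left|\supp(\pa(Y))\right|\right)$, so their bound is stated at a needle instance. You instead run the summed transportation inequality of Kaufmann--Capp\'e--Garivier by hand against a flat reference instance $\nu^{(0)}$: pairing $\nu^{(0)}$ with each alternative $\nu^{(a)}$, using disjointness of the output events to force $p_a\leq 1/2$ for all but one $a$, and using $\mathrm{kl}(1/2,1-\delta)=\Omega(\log(1/\delta))$. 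Since Theorem~33.5 is itself proved by precisely this kind of summed change-of-measure argument, the two routes are essentially interchangeable; what yours buys is self-containedness and an explicit treatment of the structural reduction (that only the $N$ needle-testing global interventions carry any KL), which the paper compresses into one sentence. You also quietly repair a genuine blemish: with needle height exactly $\epsilon$, as in the paper's construction, every action is $\epsilon$-optimal under the paper's non-strict PAC criterion, so correctness alone would not force the learner to find the needle; your choice $\mu=2\epsilon$ (any height strictly above $\epsilon$ works) is what makes $\prob_{\nu^{(a)}}(\calE_a)\geq 1-\delta$ a valid consequence of the PAC guarantee. What the paper's route buys in exchange is brevity and a statement tied to the standard characteristic-time quantity.
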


\subsection{Additive Outcome Assumption}
As the example from the previous section illustrates, a problem where information about the optimal intervention is hyper-localized (i.e.\ where one only learns about the optimal intervention by trying it) is information-theoretically difficult. Therefore, we need some assumptions to make the problem tractable and, as discussed in the introduction, turn to the additive assumption from the  causal inference literature \cite{buhlmann2014cam}.
\begin{assumption}[Additive Outcome]\label{additive}
There exist functions $f_1,\ldots, f_{\numPar}$ and a $\sigma^2$-sub-Gaussian random variable $\eta$ such that $Y = \sum_{k=1}^{\numPar} f_k(X_k) + \eta$.
\end{assumption}
This assumption implies that the causal effect of $\pa(Y)$ on $Y$ decomposes into the sum of individual effects from each parent, which results in an \emph{additive} \CBUG (\aCBUG) problem. We focus on the homoscedastic case where $\eta$ is an i.i.d. $\sigma^2$-sub-Gaussian random variable, by far the most common assumption in the literature.

\section{Pure-Exploration Linear Bandits}\label{sec:linear.bandits}
This section defines the additive combinatorial linear bandit with full-bandit feedback problem and shows how \aCBUG is a special case. With the additive outcome assumption, the CBUG problem can be cast as a pure-exploration linear bandit problem with a combinatorial action set $\calA$. The linear bandit problem is a sequential decision problem where, at round $t$, the learner chooses an action $\x^t\in\calA$ and observes $y^t = \langle \x^t, \theta^*\rangle+\epsilon^t$, where $\epsilon^t$ is a zero-mean $\sigma^2$-sub-Gaussian random variable and $\theta^*\in\Reals^d$ is a fixed but unknown parameter. The goal of the learner is to find an $(\epsilon,\delta)$-PAC action in a few rounds/samples as possible.

We cast \aCBUG as a linear bandit problem using one-hot-encoding. For $k\in [K]$, let $e_k(i)$ be the $i$th unit vector in $\Reals^{M_k}$, and for $\x\in\supp(\X_{[K]})$, define $e(\x) = (e_1(\x_1),\ldots, e_K(\x_K))$ to be the concatenation of the one-hot vectors, which produces a mapping from $\supp(\X_{[K]})$ to $\Reals^d$ with $d\df\sum_k M_k$. Defining the vector $\theta = (f_1(1),
f_1(2),\ldots, f_K(M_K))$, we obtain $\ex[Y\cond\doi(\X_{[K]} = \x)] = \langle \theta^*, e(\x)\rangle$. Therefore, the goal is to find $\arg\max_{\x\in\supp(\X_{[K]})} \langle \theta^*, e(\x)\rangle$. Note that the terms of $\theta^*$ corresponding to $X_k\notin\pa(Y)$ are zero, so $\theta^*$ is sparse. It is important to note that we only have full-bandit feedback because we observe $ \langle \theta^*, e(\x)\rangle$ and not the individual $f_k$ components. Even though the action set $\mathcal A$ has a combinatorial structure with size $\prod_{k=1}^K M_k$, as it is the Cartesian product of choosing one value for each variable, the additive assumption allows us to consider the dimension-$d$ linear problem instead.

Casting \aCBUG as a linear bandit problem enables us to borrow from the extensive literature on pure-exploration linear bandits.  One successful approach has been to treat the action selection as a optimal experimental design problem: that is, selecting actions to reveal as much information about a hidden parameter vector estimated through regression. While these optimal design problems tend to be intractable except for special cases, we show how to approximate our action set to avoid these difficulties. 

This approach was pioneered by \citet{soare2014best}, who had the insight a that the optimal design problem should optimize for learning the gaps between actions. Improvements in sample complexity were made by \citet{xu2018fully} and \citet{tao2018best} by using a different estimator and a different design approximation strategy, respectively, while \citet{fiez2019sequential} considered the more general problem of transductive experimental design. A survey of optimal design in linear bandits can be found in \citet[Chapter~22]{lattimore2020bandit}. Unfortunately, all of these algorithms have complexity that is linear in the number of actions and are therefor not tractable for our combinatorial action space. Another line of work, \cite{chen2014combinatorial,gabillon2011multi}, had the same additive action structure as us but assumed semi-bandit feedback, i.e. where noisy observations of individual $f_k(x_i)$ are possible. Because we only observe $y^t = \sum_{k=1}^{\numPar} f_k(\x_k^t) + \eta$, we are in the full-bandit setting and cannot use these algorithms either.

The only work we are aware of in the pure-exploration combinatorial linear bandit with full-bandit feedback setting is by \citet{du2021combinatorial}, who claimed the first efficient algorithm for this setting. Their approach uses a pre-sampling step to select a subset of the actions of size $O(poly(d))$ and then runs the algorithm of \cite{constantinou2017extended}. The resulting algorithm is fairly complex (requiring multiple sub-procedures including an entropy mirror-descent stage) and requires finding a size-$d$ subset of actions with rank $d$. In our case, the rank of any subset of actions is at most $d-1$ so we cannot use this algorithm. Further, their algorithm is general purpose and cannot fully exploit the structure of our action space. Hence, we created a new algorithm, introduced in the following section.

\section{Marginal Optimal Design Linear Bandit}\label{sec:additive}
Given data $\{(\x^t,y^t)\}_{t=1}^n$, we need to learn about the unknown parameter vector $\theta^*$ in a way that lets us quantify the uncertainty.  
With $V_n = \sum_{t\leq n} \x^t (\x^t)^\top$  denoting the data covariance and $V_n^\dagger$ its pseudoinverse, we use the ordinary least squares (OLS) estimator, $\hat\theta = V_n^\dagger \sum_{t\leq n} \x^t y^t$, which has the following Azuma-style confidence interval for $\hat\theta$ (see, e.g.\ \citet{soare2014best}):
\begin{lemma}\label{lem:azuma}
Let $\hat\theta$ be the OLS estimator calculated from data $\x^{[n]}$ with covariance matrix $V_n$.
For any $z\in\Reals^d$, $\delta\in(0,1)$, and for $\sigma^2$-sub-Gaussian $\eta$,
$
    \prob\left(
        \langle \hat\theta - \theta^*, z\rangle \geq \sqrt{2\sigma^2\mnorm{z}{V_n}^2\log(1/\delta)}  
    \right)
    \leq \delta.
$
\end{lemma}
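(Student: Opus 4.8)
The plan is to show that the scalar $\langle \hat\theta - \theta^*, z\rangle$ is a mean-zero sub-Gaussian random variable whose variance proxy is exactly $\sigma^2\mnorm{z}{V_n}^2$, after which the claim follows from the standard sub-Gaussian tail inequality. First I would plug the linear model $y^t = \langle \x^t, \theta^*\rangle + \eta^t$ (with the $\eta^t$ the i.i.d.\ $\sigma^2$-sub-Gaussian noise terms) into the OLS estimator to obtain $\hat\theta = V_n^\dagger\sum_{t\leq n} \x^t y^t = V_n^\dagger V_n\,\theta^* + V_n^\dagger\sum_{t\leq n} \x^t \eta^t$, so that $\hat\theta - \theta^* = (V_n^\dagger V_n - I)\theta^* + V_n^\dagger\sum_{t\leq n} \x^t \eta^t$. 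Pairing with $z$ and using symmetry of $V_n^\dagger$, the noise contribution becomes $\langle \sum_t \x^t\eta^t, V_n^\dagger z\rangle = \sum_{t\leq n} \eta^t\, w_t$ with deterministic weights $w_t \df \langle \x^t, V_n^\dagger z\rangle$ (deterministic because the design within an estimation phase is fixed before its samples are drawn, which is why an Azuma/Hoeffding-style argument applies).

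Next I would compute the variance proxy of this weighted sum. Since a sum of independent centered $\sigma^2$-sub-Gaussian variables weighted by $w_t$ is $\sigma^2\sum_t w_t^2$-sub-Gaussian, the key is to evaluate $\sum_t w_t^2 = (V_n^\dagger z)^\top\big(\sum_t \x^t(\x^t)^\top\big)(V_n^\dagger z) = z^\top V_n^\dagger V_n V_n^\dagger z$. The Moore--Penrose identity $V_n^\dagger V_n V_n^\dagger = V_n^\dagger$ collapses this to $z^\top V_n^\dagger z = \mnorm{z}{V_n}^2$, so the noise term is exactly $\sigma^2\mnorm{z}{V_n}^2$-sub-Gaussian. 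Then the tail bound for a mean-zero $\nu^2$-sub-Gaussian $S$, namely $\prob\big(S \geq \sqrt{2\nu^2\log(1/\delta)}\big)\leq\delta$, gives the result upon setting $\nu^2 = \sigma^2\mnorm{z}{V_n}^2$.

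The step I expect to be the main obstacle is handling the pseudoinverse rigorously, since $V_n$ is generally rank-deficient here (its rank is at most $d-1$, as noted in \secref{sec:linear.bandits}), so $V_n^\dagger V_n \neq I$ and the bias term $(V_n^\dagger V_n - I)\theta^*$ need not vanish a priori. The resolution is that $V_n^\dagger V_n$ is the orthogonal projection $P$ onto $\mathrm{range}(V_n)$, which is symmetric, so $\langle (P - I)\theta^*, z\rangle = \langle \theta^*, (P-I)z\rangle = 0$ whenever $z \in \mathrm{range}(V_n)$, i.e.\ whenever $z$ lies in the span of the actions played. This is precisely the estimable regime in which the confidence interval is ever invoked (the bound is applied to gap directions such as $e(\x) - e(\x')$, which lie in the span of the explored actions), so the bias drops out and the clean variance computation above delivers the stated inequality.
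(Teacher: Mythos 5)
Your core argument is correct, and it is worth noting that the paper never actually supplies a proof to compare against: despite the sentence pointing to Appendix~\ref{appendix:proofs}, that appendix contains no proof of Lemma~\ref{lem:azuma}; the result is simply attributed to \citet{soare2014best}. Your route --- write $\hat\theta - \theta^* = (V_n^\dagger V_n - I)\theta^* + V_n^\dagger\sum_{t\leq n} \x^t\eta^t$, reduce the noise contribution to $\sum_{t\leq n} w_t\eta^t$ with deterministic weights $w_t = \langle \x^t, V_n^\dagger z\rangle$, collapse $\sum_t w_t^2 = z^\top V_n^\dagger V_n V_n^\dagger z = \mnorm{z}{V_n}^2$ via the Moore--Penrose identity, and finish with the one-sided sub-Gaussian tail bound --- is the standard fixed-design argument that the cited reference uses, and every step is sound; the determinism of the weights is exactly the paper's side condition that $\x^{[n]}$ not be a function of the data.

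Your final paragraph is also the right instinct and is, in effect, a correction to the statement: for singular $V_n$ the lemma as written (``for any $z\in\Reals^d$'') is false, since the bias $\langle\theta^*,(V_n^\dagger V_n - I)z\rangle$ is not controlled by $\mnorm{z}{V_n}$ and can exceed the threshold with constant probability (already for Gaussian noise). Restricting to $z\in\mathrm{range}(V_n)$ is the needed fix. Be careful, however, with the claim that the bound is only ever invoked on such $z$: the marginal-uniformity condition of Lemma~\ref{lem:optimal.design.solution} does not guarantee this. For instance, with $\calS_1=\calS_2=\{1,2\}$, the design that alternates between the actions $(1,1)$ and $(2,2)$ is uniform in every marginal, yet the single-variable gap direction $e_1(1)-e_1(2)$ --- precisely the kind of direction \MODL's rejection rule uses through $\hat\Delta_k^{i,j}$ --- lies outside the span of the played actions, so its OLS estimate carries an uncontrolled bias. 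Spanning the gap directions is an additional requirement on the design, not a consequence of marginal uniformity; the paper itself elides this point (its remark in Section~\ref{sec:additive} only observes that the nullspace of $V_n$ \emph{includes} certain block-constant vectors). So treat this as a gap inherited from the paper rather than one your proof creates: your tail bound is correct for $z\in\mathrm{range}(V_n)$, but discharging that hypothesis requires constraining the design beyond what Lemma~\ref{lem:optimal.design.solution} states.
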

This proof, as well as all other omitted proofs, are given in Appendix~\ref{appendix:proofs}.
The lemma requires $\x^{[n]}$ to not be a function of the data. The main challenge in using this inequality is that we need to solve for a sequence of covariates to minimize the bound.

Following \citet[Chapter~22]{lattimore2020bandit}, we propose an action-elimination algorithm that proceeds in phases. Each phase begins with a set $\calS$ of plausibly best actions and a desired tolerance $\gamma$. The algorithm chooses actions to optimize the upper bound in Lemma~\ref{lem:azuma}, then uses this guarantee to prune $\calS$. The tolerance is then decreased before the next phase. Phases repeat until an optimal action is identified or all sub-$\epsilon$ actions have been removed.  There are two main computational difficulties. First, the number of actions, $|\mathcal A|=\prod_k M_k$, is very large, which makes the pruning step potentially intractable. We need algorithms that scale with the exponentially smaller ambient dimension $d=\sum_k M_k$. Second, the action selection (known as an optimal design problem) is a combinatorial optimization problem and generally difficult. We solve both problems at once by limiting $\mathcal S$ to sets that decompose over variables, defined below.

\textbf{Marginal Action Sets.} 
We say that a set of actions $\calS\subseteq {\mathcal A}=\supp(\X_{[K]})$ is \emph{marginal} if there exist $\calS_k\subseteq\supp(X_k)$, $k=1,\ldots, K$ such that
$
    \calS = 
    \left\{
     (j_1, j_2,\ldots, j_K):
     j_1\in \calS_1,
     j_2\in \calS_2,
     \ldots,
     j_K\in \calS_K
     \right\}
$. In other words, $S$ consists of the Cartesian product of $\calS_1,\ldots, \calS_K$. Marginal action sets are intuitive: if we have eliminated, say, $X_k = j$ as a good action, then we should never consider any action where $X_k = j$. Marginal action sets solve the combinatorial action set problem, since such sets can be represented by $\sum_k M_k$ binary values.

\textbf{Optimal Design.} 
In linear bandits, our goal is to choose actions $\x^t$ to reveal as much about the optimal action as possible. While the most obvious goal would be to choose actions to minimize a bound on $\langle\hat\theta-\theta^*, e(\x)\rangle$ simultaneously for all actions $\x\in\calS$, estimating the \emph{gaps} between actions $\x$ and $\x'$, defined as $\Delta(\x, \x') \df \langle\theta^*, e(\x)-e(\x')\rangle$, is more efficient \citep{fiez2019sequential}. Defining $V_n \df \sum_{t=1}^n e(\x^t)e(\x^t)^\top$, the optimal design problem is 
\begin{align}\label{eqn:optimal.design}
   \arg\min_{\x^{[n]}}   
   \max_{\x, \x'\in\calS} \mnorm{e(\x)-e(\x')}{V_n}.
\end{align}
Generally, the optimal design problem (\ref{eqn:optimal.design}) is intractable \cite{xu2018fully}, and the state-of-the-art algorithms are linear in $\calS$ \cite{allen2021near}. Fortunately, marginal action sets afford a computationally simple solution with an easy to calculate upper bound.
\begin{lemma}\label{lem:optimal.design.solution}
Assume that $\calS$ is marginal, and let $\tilde\x^{[n]}$ be any sequence of actions that are uniform in every marginal, i.e.\ for every $k$, $\sum_{t=1}^n\indicator{\x_k^t = i} - \indicator{\x_k^t = j}\leq 1$ for all  $i,j\in \calS_k$. With $\tilde V_n$ as the covariance matrix of $\tilde\x^{[n]}$, we have
\[
    \max_{\x,\x'\in\calS}\mnorm{e(\x)-e(\x')}{\tilde V_n}\leq \sum_i \frac{2|\calS_i|}{n-|\calS_i|}\leq \sum_i \frac{2|\calS_i|}{n}.
\]
\end{lemma}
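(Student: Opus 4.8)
The plan is to bound the squared pseudoinverse norm $\mnorm{e(\x)-e(\x')}{\tilde V_n}^2 = (e(\x)-e(\x'))^\top \tilde V_n^\dagger (e(\x)-e(\x'))$ by exploiting the one-hot block structure and to show that it decomposes coordinate-by-coordinate into the claimed sum. First I would write $z \df e(\x)-e(\x') = \sum_{k=1}^K z_k$, where each $z_k$ is supported on the $k$-th one-hot block and equals $e_k(\x_k)-e_k(\x_k')$; this is the zero vector when $\x_k=\x_k'$ and otherwise has a single $+1$ at coordinate $\x_k$ and a single $-1$ at coordinate $\x_k'$. In particular every block of $z$ sums to zero, so $z$ lies in the centered subspace $U \df \{u : \ones^\top u^{(k)} = 0 \text{ for all } k\}$, a fact that will neutralize the off-diagonal blocks below.

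The key structural step is to examine the blocks of $\tilde V_n = \sum_t e(\x^t)e(\x^t)^\top$. The $k$-th diagonal block is $\diag(n_k^1,\ldots,n_k^{M_k})$, where $n_k^i = \sum_t \indicator{\x_k^t = i}$ counts the rounds value $i$ was played on variable $k$; the marginal-balance hypothesis gives $n_k^i \ge \lfloor n/|\calS_k|\rfloor \ge (n-|\calS_k|)/|\calS_k|$ for every $i\in\calS_k$. The off-diagonal block $(k,l)$ is the empirical joint count matrix of variables $k$ and $l$. The heart of the argument is to show these off-diagonal blocks do not contribute to the quadratic form evaluated at $z$, so that $\tilde V_n$ acts effectively block-diagonally. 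For a product (round-robin) design the off-diagonal count matrices are proportional to $\ones\ones^\top$, which annihilates the centered vectors $z_k$ since $\ones^\top z_k = 0$; this collapses the quadratic form to $\sum_k z_k^\top (\diag(n_k^i))^\dagger z_k$.

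Given this block-diagonal reduction the bound is immediate: each nonzero $z_k$ contributes $z_k^\top (\diag(n_k^i))^\dagger z_k = 1/n_k^{\x_k} + 1/n_k^{\x_k'} \le 2|\calS_k|/(n-|\calS_k|)$ by the balance bound, and summing over the coordinates yields $\sum_k 2|\calS_k|/(n-|\calS_k|)$, which is the middle expression; the final bound then follows by lower-bounding $n-|\calS_k|$ by a constant fraction of $n$.

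The main obstacle is the off-diagonal step for a \emph{general} uniform-in-every-marginal sequence, where the joint may be arbitrarily correlated, the off-diagonal count matrices need not be proportional to $\ones\ones^\top$ (so they do not vanish on $U$), and $z$ may even fail to lie in the range of $\tilde V_n$, forcing genuine use of the pseudoinverse rather than an inverse. I would handle this either by observing that the algorithm is free to realize the prescribed marginals with a product design, for which the clean argument above applies verbatim, or, to cover the stated generality, by passing to the min-norm characterization $z^\top \tilde V_n^\dagger z = \min\{\|c\|_2^2 : \sum_t c_t e(\x^t) = \Pi z\}$ and exhibiting sample weights $c$ that synthesize the projection of $z$ with $\ell_2$-norm controlled by the per-coordinate counts. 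Controlling the cross-coordinate residuals introduced by correlation is exactly where the work concentrates, and I expect this to be the delicate part of the proof.
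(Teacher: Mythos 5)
Your argument is complete only for an exactly balanced product design, but the lemma quantifies over \emph{every} sequence that is uniform in each marginal, with no constraint on the joint behaviour across variables --- and that general case is exactly where you stop. Route (a) proves a weaker statement than the one claimed (and even a round-robin sequence fails to have joint-count blocks proportional to $\ones\ones^\top$ once some $|\calS_k|$ does not divide $n$, so the clean argument does not apply ``verbatim'' even there); route (b) is a plan whose hard step --- controlling the cross-coordinate residuals that appear when you try to synthesize $e(\x)-e(\x')$ from the observed rows --- you explicitly defer. This is a genuine gap, and it sits at the heart of the lemma. Note also that the collapse you aim for cannot hold as an identity: take $K=2$, $\calS_1=\calS_2=\{1,2\}$ and the sequence $(1,1),(1,1),(2,2),(2,2)$, which is uniform in both marginals; for $z=e((1,1))-e((2,2))$ one computes $\mnorm{z}{\tilde V_n}^2=1$ while the diagonal form gives $2$. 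Correlation strictly changes the quadratic form, so only a one-sided comparison with the diagonal part can be true in general, and any proof that the cross terms ``do not contribute'' is doomed.

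The paper's proof supplies precisely this one-sided mechanism, and it is shorter than either of your routes: write $V_n=D_n+C_n$, where $D_n=\diag(N_1^1,\ldots,N_K^{M_K})$ is the diagonal matrix of counts and $C_n$ collects all cross-variable blocks, assert that both are positive semi-definite, and invoke the monotonicity property that $x^\top (A+B)^\dagger x\le x^\top A^\dagger x$ for PSD $A,B$, thereby discarding $C_n$ wholesale, whatever the correlation structure of the sequence. What remains is the diagonal computation you already have, $\mnorm{e(\x)-e(\x')}{D_n}^2=\sum_k \left(1/N_k^{\x_k}+1/N_k^{\x_k'}\right)\indicator{\x_k\neq\x_k'}$, bounded by $\sum_k 2|\calS_k|/(n-|\calS_k|)$ using the balance of the counts. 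So the idea you are missing is: do not try to make the cross terms vanish on $z$ (they do not); \emph{dominate} them away via Loewner-type monotonicity of the pseudoinverse quadratic form. Two caveats, which your instinct that this step is ``delicate'' correctly senses: the monotonicity fact requires $x\in\mathrm{range}(D_n)$ (true here, since every value in $\calS_k$ receives a positive count), and the asserted positive semi-definiteness of $C_n$ is itself nontrivial --- in the correlated example above $C_n$ has a negative eigenvalue in directions orthogonal to the range of $V_n$ --- so the paper's one-line reduction is carrying real weight; but that reduction, not the product-design computation, is where the entire off-diagonal difficulty is absorbed.
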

Roughly, the proof proceeds by noting that $\tilde V_n$ can be written as a diagonal matrix of counts plus the cross terms, both of which are positive semi-definite. We can upper bound the total expression the norm defined only with the diagonal terms, which permits a particularly simple form of $\mnorm{e(\x)-e(\x')}{\tilde V_n}$ that we can explicitly calculate for uniform sequences of marginals. 

\begin{remark}
The embedding that we use for the linear bandits is not full rank. For example, for any vector $v\in\Reals^K$ with $\ones^\top v = 0$, the null space of $V_n$ includes $(v_1 \ones(M_1), \ldots, v_K\ones(M_K))$ (where $\ones(n)$ is the ones vector of length $n$). However, what is important is that the projection of the nullspace onto the coordinates in $\calS_k$ is always in the all-ones direction, which allows us to calculate unbiased estimates of the gaps, even though we may not be able to identify $\theta$. This insight provides another reason why \eqref{eqn:optimal.design} is the correct optimization objective.
\end{remark}

\subsection{Deriving the Elimination Algorithm}\label{sec:rejection.rule}
At each phase of the algorithm, we have an action set $\calS$ and an error tolerance $\gamma$, and we wish to find a set of actions $R$ to remove from $\calS$ that can be guaranteed to be suboptimal. The crux is that we can only calculate the empirical gaps $\hat\Delta(\x, \x') \df \langle\hat\theta, e(\x)-e(\x')\rangle$, thus we need to bound the error $\hat\Delta(\x, \x') - \Delta(\x, \x')$. 

Suppose that we choose $\x^{[n]}$ according to Lemma~\ref{lem:optimal.design.solution}; Lemma~\ref{lem:azuma} then guarantees that 
$
\langle \hat\theta - \theta^*, e(\x)-e(\x')\rangle\leq \sqrt{4\sigma^2 \frac{\sum_k |\calS_k|}{n}\logf{1}{\delta}}
$
holds with high probability for all $\x, \x'\in\calS$. This means that it suffices to take $n = \left\lceil \frac{4\sigma^2 |\sum_k\calS_{k}|}{\gamma^2}\logf{L}{\delta}\right\rceil$
 if we want to ensure that 
$\langle\hat\theta - \theta^*, e(\x)-e(\x')\rangle\leq\gamma$.

The usual choice in the bandit literature is $R = \{ \x\in\calS: \exists \x'\in\calS \text{ s.t. } \hat\Delta(\x,\x')\geq\gamma\}$. Letting $\x^*$ be the optimal action, we see that
\begin{align*}
    \langle \theta^*, e(\x)-e(\x^*)\rangle
    &\leq
    \langle \theta^*, e(\x)-e(\x')\rangle\\
    &\leq 
    \langle \hat\theta, e(\x)-e(\x')\rangle - \gamma
    \leq 
    0
\end{align*}
for all $\x\in R$. Using the inequality in the other direction, any $\x\in \calS \setminus R$ must have
$
    \langle \theta^*, e(\x)-e(\x^*)\rangle
    \leq
    \langle \theta^*-\hat\theta, e(\x)-e(\x^*)\rangle
+     \langle \hat\theta, e(\x)-e(\x^*)\rangle
    \leq
    2\gamma
$.

\begin{algorithm}[tb]
   \caption{\MODL}
   \label{algorithm:MODL}
\begin{algorithmic}
   \STATE {\bfseries Input:} $\delta>0$, $\epsilon>0$, $\X_{[K]}$, $\sigma^2$, $B$
\STATE Optional: upper bound $\overline\calP_Y$ on $\calP_Y$
\STATE $\calS_k(1) \leftarrow [M_k]$ for $k=1,\ldots, K$
\STATE $L\leftarrow \left\lfloor \logf{2BK}{\epsilon}\right\rfloor$
\FOR{$\ell=1,\ldots, L$}
    \STATE $\gamma(\ell) \leftarrow\frac{\epsilon}{K} 2^{L-\ell+1}$
    \STATE $n \leftarrow \left\lceil \frac{4\sigma^2 |\sum_k\calS_k(\ell)|}{\gamma(\ell)^2}\logf{L}{\delta}\right\rceil$
    \STATE Choose $\x^1,\ldots, \x^n\in\supp(\X_{[K]})$ using Lemma~\ref{lem:optimal.design.solution}
    \STATE Collect $y^t \sim P(Y\cond\doi(\X_{[K]} = \x^t))$ $\forall t\leq n$
    \STATE Update $V_n$, $\hat\theta(\ell) = V_n^{\dagger} \sum_{t\leq n}\x^t y^t$
    \STATE Calculate empirical gaps $\hat\Delta_k^j$
    \FOR{$k=1, \ldots, K$}
    \STATE $\calS_k(\ell+1) \leftarrow \left\{j\in \calS_k(\ell): \hat\Delta_k^j < \gamma(\ell)\right\}$
    \ENDFOR
    \STATE $\hat \calP_Y(\ell)\leftarrow\sum_k \indicator{|\calS_k(\ell)| = 1}$\\
    \IF{$\hat \calP_Y(\ell+1) = K$ or $\hat \calP_Y(\ell)\geq \overline\calP_Y$}
    \STATE Break
    \ENDIF
\ENDFOR
\STATE Return $\arg\max_{\x\in \calS} \langle\hat\theta(\ell), e(\x)\rangle$ and $\hat\theta(\ell)$
\end{algorithmic}
\end{algorithm}

This rejection procedure is guaranteed, with probability at least $1-\delta$, to eliminate all $2\gamma$-suboptimal actions and never eliminate the optimal action.

The reader may notice that $\calS\setminus R$, is not marginal even with this choice of $R$, even if $\calS$ is marginal. Instead, we want a $R$ such that 1) $\calS\setminus R$ is marginal, and 2) $R$ is as large as possible. Such a marginal-preserving rejection rule is necessary for the tractability of \eqref{eqn:optimal.design}.

Since we require $\calS'\df \calS\setminus R$ to be marginal, we can define $R_k\df \calS_k \setminus\calS'_k$ to be the values removed from $X_k$'s marginal. How must we constrain $R_k$ so that, for every $\x\in R$, there must be some $\x'\in\calS$ with $\hat\Delta(\x, \x')\geq\gamma$?  

We use the fact that gaps decompose by variables: if we define
$\hat\Delta_k^{i,j} \df \hat\theta_k^i - \hat\theta_k^j$ and 
$\hat\Delta_k^{j} \df \max_{i}\hat\theta_k^i - \hat\theta_k^j$,
then the gap between $\x$ and $\x'$ decomposes as
$\hat\Delta(\x, \x') = \sum_{k\in [K]} \hat\Delta_k^{\x_k, \x'_k}
$. Taking $\x'= \x^*$, we have that $\hat\Delta(\x,\x^*) = \sum_k\hat\Delta_k^{\x_k}$. Thus, we may only include $i\in R_k$ if, for all $\x\in\calS$ with $\x_k = i$, we can guarantee that $\hat\Delta(\x,\x^*) = \sum_k \hat\Delta_k^{\x_k}\geq\gamma$. Using $\x^*(i)$ to denote $\x^*$ with the $k$th value set to $i$, it is easy to check that $\hat\Delta(\x^*(i),\x^*) = \hat\Delta_k^i$, which implies that we can only include $i$ in $R_k$ if $\hat\Delta_k^i \geq \gamma$.

\begin{figure*}[t]
\centering
\includegraphics[width=14.3cm,height=6.5cm]{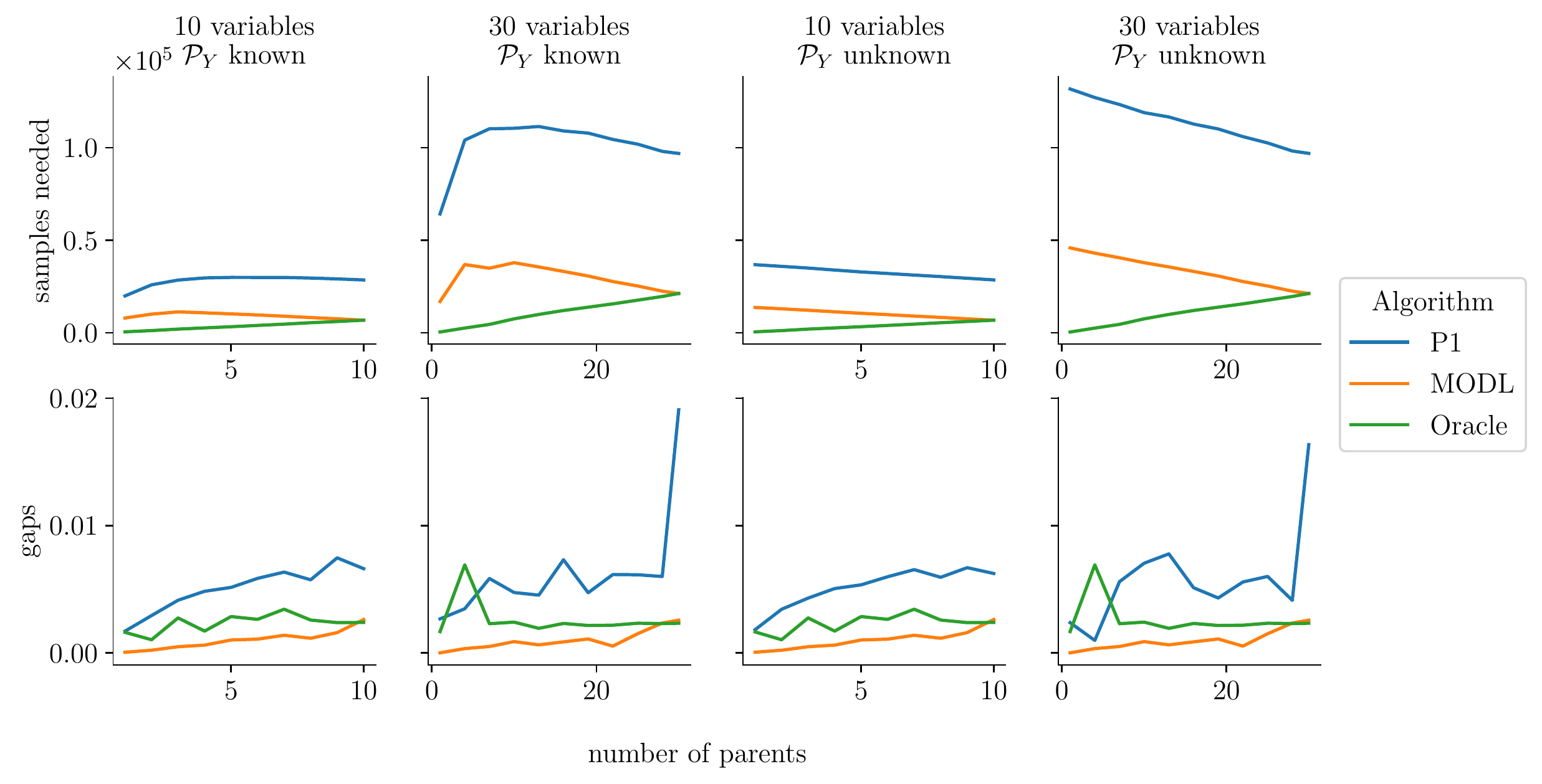}
\caption{Sample complexity and average gaps versus number of parents of $Y$.} 
\label{fig:var_parents}
\end{figure*}
\subsection{The \MODL Algorithm}
With the optimal design and rejection procedures derived, we can present the \emph{marginal optimal design linear bandit} (\MODL) algorithm and its sample complexity bound. \MODL proceeds in phases $\ell=1,\ldots, L$, and in each phase it solves an $\mathcal{XY}$-optimal design problem, using the results of Lemma~\ref{lem:optimal.design.solution} with error $\gamma = \epsilon 2^{L-\ell}$, ensuring that $\gamma=\frac{\epsilon}{2}$ by the time the algorithm terminates. The algorithm uses the rejection rule of Section~\ref{sec:rejection.rule} to maintain a marginal action set $\calS$. We also consider the case when $\calP_Y$ is provided, which allows termination once $\calP_Y$ variables have their optimal value identified. The intuition is that $\theta_k^1, \ldots, \theta_k^{M_k}$ are approximately equal for all for $X_k\not\in\pa(Y)$, so the algorithm is not able to limit $\calS_k$ to a single value. Pseudocode is provided in Algorithm \ref{algorithm:MODL}.

We remark that restricting to marginal action sets does not eliminate any action. Instead, this restriction potentially reduces the number of actions that can be eliminated by requiring that the set of remaining actions be expanded to the smallest marginal action set containing it. In essence, marginal action sets allow us to trade-off some statistical efficiency for computational tractability.

We analyzed the expected sample complexity of the algorithm and present an upper bound in Theorem~\ref{thm:sample.complexity}. We find the typical sum-of-reciprocal-squared-gaps dependence common to best-arm-identification problems, $O\left(\sum_{i,k}(\Delta_k^i\vee\epsilon)^{-2}\right)$, however, instead of a sum over the combinatorial action set, the sum is over the all gaps for individual variables, which is the sample complexity one would expect if each variable could be played independently. In other words, despite only having full-bandit feedback, we obtain the sample complexity as if we had semi-bandit feedback. A substantial part of the complexity comes from the $\sum_{k\notin\pa(Y)} M_k \epsilon^{-2}$ term, which arises because the non-parents are the most difficult: to differentiate between the cases when a variable is a non-parent or when there is a single value that is $\epsilon/K$ better than the rest, all values must be estimated within $\epsilon/K$.

For the known-$\calP_Y$ case, the $(\Delta_k^i\vee(\epsilon/K))^{-2}$ term is replaced by
$(\Delta_k^i\vee\Delta_{\min}\vee(\epsilon/K))^{-2}$ which could be substantially smaller if $\Delta_{\min} \gg \epsilon/K$. We see this reduction because we no longer need to identify the non-parents, but rather can terminate once the minimum gap among the parents is found.
\begin{theorem}\label{thm:sample.complexity}
Algorithm~\ref{algorithm:MODL} is $(\epsilon,\delta)$-PAC. The expected sample complexity has an upper bound of
\begin{align*}
    \frac{16\sigma^2}{3} \logf{\logf{BK}{\epsilon}}{\delta}
    \sum_{k=1}^K
    \sum_{i=1}^{M_k}
    \frac{1}{(\Delta_{\min} \vee \Delta_k^i\vee\frac{\epsilon}{K})^2},
\end{align*}
where $\Delta_{\min} = \min_{k\leq \calP_Y}\min_{i\in [M_k]}\Delta_k^i$ is the minimum gap in the parents in the case when number of parents $\calP_Y$ is provided, and $0$ otherwise.
\end{theorem}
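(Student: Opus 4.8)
The plan is to prove both claims---the $(\epsilon,\delta)$-PAC guarantee and the sample-complexity bound---on a single high-probability ``good event'' $G$ on which every confidence statement the algorithm relies on holds simultaneously. First I would fix a phase $\ell$: having selected $\x^{[n]}$ by the uniform-marginal rule, I combine the norm bound $\mnorm{e(\x)-e(\x')}{V_n}^2\le\tfrac{2\sum_k|\calS_k(\ell)|}{n}$ of Lemma~\ref{lem:optimal.design.solution} with the deviation bound of Lemma~\ref{lem:azuma} to get, for any \emph{fixed} direction $z=e_k(i)-e_k(j)$, a width of $\sqrt{\tfrac{4\sigma^2\sum_k|\calS_k(\ell)|}{n}\log(1/\delta')}$; substituting $n=n(\ell)=\lceil\tfrac{4\sigma^2\sum_k|\calS_k(\ell)|}{\gamma(\ell)^2}\log(L/\delta)\rceil$ drives this to exactly $\gamma(\ell)$. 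I then take a union bound over the at most $O\big(\sum_k|\calS_k(\ell)|^2\big)$ fixed single-coordinate directions (applying the one-sided bound in both orientations) and over the $L=\lfloor\log(2BK/\epsilon)\rfloor$ phases, so that on $G$, with $\prob(G)\ge1-\delta$, we have $|\hat\Delta_k^{i,j}-\Delta_k^{i,j}|\le\gamma(\ell)$ in every phase; since a maximum of terms each within $\gamma(\ell)$ stays within $\gamma(\ell)$, this transfers to the gap-to-best quantities $|\hat\Delta_k^i-\Delta_k^i|\le\gamma(\ell)$. Because $L\asymp\log(BK/\epsilon)$, the resulting factor is $\log(L/\delta)=\log\!\big(\log(BK/\epsilon)/\delta\big)$, the direction count entering only logarithmically and being absorbed.

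On $G$ I would establish correctness in two steps. The per-coordinate optimum $i_k^*=\arg\max_i\theta_k^i$ is never removed: since $\Delta_k^{i_k^*}=0$, the bound gives $\hat\Delta_k^{i_k^*}<\gamma(\ell)$, so $i_k^*$ passes the keep-test $\hat\Delta_k^j<\gamma(\ell)$ in every phase, hence $\x^*\in\calS$ throughout and the returned maximizer is well-defined. For the suboptimality of $\hat\x=\arg\max_{\x\in\calS}\langle\hat\theta(\ell),e(\x)\rangle$, I would decompose coordinate by coordinate: $\Delta(\hat\x,\x^*)=\sum_k(\theta_k^{i_k^*}-\theta_k^{\hat\x_k})$, and for each $k$ use the fixed direction $e_k(i_k^*)-e_k(\hat\x_k)$ together with $\hat\theta_k^{i_k^*}\le\hat\theta_k^{\hat\x_k}$ (as $\hat\x_k$ is the empirical best in $\calS_k$) to bound each summand by the terminal tolerance $\gamma$, giving $\Delta(\hat\x,\x^*)\le K\gamma$. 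The schedule drives the terminal tolerance to $\epsilon/K$, so the total is at most $\epsilon$. This coordinate-wise decomposition is exactly why the tolerance carries the $1/K$ factor: we cannot apply Lemma~\ref{lem:azuma} to the data-dependent full-gap direction $e(\hat\x)-e(\x^*)$ without a union bound over the combinatorial action set, so we pay a factor $K$ instead.

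For the sample complexity I would work on $G$ and control the phase sizes. The removal analysis shows any value with $\Delta_k^i\ge2\gamma(\ell)$ is eliminated by the end of phase $\ell$, so $\calS_k(\ell)\subseteq\{i:\Delta_k^i<c\,\gamma(\ell)\}$ for an absolute constant $c$ coming from the factor-two gap between consecutive tolerances. Writing $\sum_\ell n(\ell)\le\sum_\ell\tfrac{4\sigma^2\log(L/\delta)}{\gamma(\ell)^2}\sum_k|\calS_k(\ell)|+L$ and swapping the order of summation, the budget attributable to a fixed value $(k,i)$ is $4\sigma^2\log(L/\delta)\sum_{\ell:\gamma(\ell)\gtrsim\Delta_k^i}\gamma(\ell)^{-2}$. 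Since $\gamma(\ell)$ halves each phase this is a geometric series, within a $\tfrac43=\sum_{j\ge0}4^{-j}$ factor of its largest term, and the smallest admissible $\gamma(\ell)$ is $\Theta\!\big(\Delta_k^i\vee\tfrac{\epsilon}{K}\big)$ (the schedule bottoms out at $\gamma(L)\asymp\tfrac{\epsilon}{K}$). Collecting the $4\sigma^2$ from the width, the $\tfrac43$ from the series, and $\log(L/\delta)=\log(\log(BK/\epsilon)/\delta)$ yields the $\tfrac{16\sigma^2}{3}$ prefactor times $\sum_{k,i}(\Delta_k^i\vee\tfrac{\epsilon}{K})^{-2}$; the never-eliminated optimal values and the non-parents (all with $\Delta_k^i=0$, since $f_k\equiv0$) are precisely the terms floored at $\tfrac{\epsilon}{K}$.

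For the known-$\calP_Y$ variant I would show that on $G$ the counter $\hat\calP_Y(\ell)=\sum_k\indicator{|\calS_k(\ell)|=1}$ counts only parents: a non-parent has $\Delta_k^i=0$ for all $i$, so its values are never removed and $|\calS_k(\ell)|>1$ always, while a parent $k$ collapses to a singleton once $\gamma(\ell)\le\Delta_{\min}/2$ with $\Delta_{\min}=\min_{k\le\calP_Y}\min_i\Delta_k^i$. Thus once $\gamma$ crosses $\Theta(\Delta_{\min})$ all $\calP_Y$ parents are singletons, the break condition $\hat\calP_Y\ge\overline\calP_Y$ fires, and the expensive small-$\gamma$ phases are skipped; re-running the geometric-series bound with this earlier stopping point replaces the floor $\tfrac{\epsilon}{K}$ by $\Delta_{\min}\vee\tfrac{\epsilon}{K}$, producing the $(\Delta_{\min}\vee\Delta_k^i\vee\tfrac{\epsilon}{K})^{-2}$ terms. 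I expect the main obstacle to be bookkeeping the exact constant $\tfrac{16}{3}$: pinning down the survival constant $c$, absorbing the per-phase ceilings (which sum to a lower-order $L$ term), and confirming that the dominant geometric term aligns with $(\Delta_k^i\vee\tfrac{\epsilon}{K})^{-2}$ without extra slack. A secondary subtlety is making ``expected'' precise---the bound is deterministic on $G$, and I would argue the complement contributes negligibly because the algorithm always halts within $L$ phases, so the expectation is dominated by the good-event value.
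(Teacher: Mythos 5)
Your proposal is correct and follows essentially the same route as the paper's own proof: the same high-probability good event built by combining Lemma~\ref{lem:azuma} with Lemma~\ref{lem:optimal.design.solution} and the phase schedule $n_\ell$, the same elimination-correctness argument (the optimum survives every phase and the returned action is $\epsilon$-good via the per-variable tolerance $\epsilon/K$), the same budget accounting (surviving values satisfy $\Delta_k^i \lesssim \gamma(\ell)$, swap the order of summation, geometric series contributing the factor $\tfrac43$ and hence $\tfrac{16\sigma^2}{3}\log(L/\delta)$), and the same early-termination argument replacing the floor $\epsilon/K$ by $\Delta_{\min}\vee\epsilon/K$ when $\calP_Y$ is known. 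If anything, you are slightly more explicit than the paper about the union bound over coordinate directions and about converting the good-event bound into a bound on the expected sample complexity, points the paper treats only informally.
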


Due mostly to the additive assumption, the sample complexity contains $\sum_k M_k$ terms of order $O\left((\Delta \vee \epsilon)^{-2}\right)$, which is the same order as the sample complexity of running a separate bandit algorithm for each variable despite only observe the sum of rewards. In contrast, a naive approach which, ignoring the structure, simply uses a best-arm-identification algorithm over the combinatorial action set would have $\prod_k M_k$ terms in the complexity bound of order
$\sum_{j_1=1}^{M_1}\cdots
\sum_{j_K=1}^{M_K}
 \frac{1}{(\Delta_{\min} \vee (\sum_k \Delta_{k}^{j_k})\vee\epsilon)^2}$.

\textbf{Recovering the Parents of $Y$.}
With simple assumptions on $f_k$, we may recover a good estimate for $\pa(Y)$ upon termination of the algorithm. Using the parameter estimates returned by the algorithm, we define $\widehat\pa(Y)$ to be all the nodes $X_k$ where $ \forall \ell,  |\hat\theta_k^i(\ell)-\hat\theta_k^j(\ell)|\leq 2\gamma(\ell)
\;\;\forall i,j\in\calS_k(\ell)$.
This formula follows the intuition that non-parents $k$ have all $\theta_k^j$ identically equal and thus $\hat\theta_k^j(\ell)$ should be within the error tolerance $\gamma(\ell)$. We can show that this method works with high probability, provided an identifiability condition holds. Without any identifiability assumptions, no algorithm can be guaranteed to recover the parents.
\begin{theorem}\label{thm:recovering.parents}
Assume that there is some $\epsilon_{\min}>0$ such that, for all $k\leq \calP_Y$, there exist $i, i'\in[M_k]$ with $|f_k(i)-f_k(i')|\geq \epsilon_{\min}$. Let $\hat\x$ and $\hat\theta$ be the output of Algorithm~\ref{algorithm:MODL} run with $\epsilon\leq\epsilon_{\min}$ and $\delta>0$. Then $\widehat\pa(Y)$ as defined above has
$    \prob(\widehat\pa(Y) = \pa(Y))\geq 1-\delta$.
Furthermore, the intervention $\hat\x_{\widehat\pa(Y)}\df \{\X_i = \hat\x_i: \X_i\in\widehat\pa(Y)\}$, which is $\hat\x$ limited to $\pa(Y)$, is $(\epsilon, \delta)$-PAC.
\end{theorem}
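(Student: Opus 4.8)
The plan is to condition on the same high-probability ``good event'' $\mathcal{E}$ that underlies the $(\epsilon,\delta)$-PAC guarantee of Theorem~\ref{thm:sample.complexity}: the event on which the confidence bound of Lemma~\ref{lem:azuma} holds simultaneously across all phases $\ell=1,\ldots,L$ and all surviving pairs. On $\mathcal{E}$ --- which has probability at least $1-\delta$ by the union bound built into the per-phase sample sizes $n$ --- every empirical block-gap is accurate to its tolerance, i.e.\ $|\hat\Delta_k^{i,j}(\ell) - (\theta_k^{*i}-\theta_k^{*j})|\le\gamma(\ell)$ for all $k$, all $\ell$, and all $i,j\in\calS_k(\ell)$. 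I would carry out both inclusions, $\widehat\pa(Y)\supseteq\pa(Y)$ and $\widehat\pa(Y)\subseteq\pa(Y)$, deterministically on $\mathcal{E}$, so that $\prob(\widehat\pa(Y)=\pa(Y))\ge\prob(\mathcal{E})\ge 1-\delta$.

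For the easy direction I would show that no non-parent is ever flagged. If $X_k\notin\pa(Y)$ then $\theta_k^{*i}=0$ for every $i$ by the one-hot construction, so the true block-gaps all vanish and the accuracy bound yields $|\hat\theta_k^i(\ell)-\hat\theta_k^j(\ell)|=|\hat\Delta_k^{i,j}(\ell)|\le\gamma(\ell)$ at every phase. Thus the empirical estimates of a non-parent stay strictly below the flagging threshold $2\gamma(\ell)$, and in particular $\hat\Delta_k^j(\ell)<\gamma(\ell)$ so no value of a non-parent is ever eliminated from $\calS_k$; hence $X_k\notin\widehat\pa(Y)$. The factor of two in the threshold is precisely the slack that makes this direction robust.

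The reverse inclusion --- every parent is flagged --- is where the real work lies, and is the step I expect to be the main obstacle. The identifiability hypothesis guarantees that each parent $X_k$ has spread $\max_i f_k(i)-\min_i f_k(i)\ge\epsilon_{\min}$, so the worst value $w$ has true gap $\Delta_k^w\ge\epsilon_{\min}$. I would first verify that on $\mathcal{E}$ the optimal value is never eliminated, and then argue that because the tolerance schedule $\gamma(\ell)$ is driven down to a final level that is at most $\epsilon_{\min}/2$ (precisely because the algorithm is run with $\epsilon\le\epsilon_{\min}$), there is a phase at which $\hat\Delta_k^w(\ell)$ provably clears the threshold, witnessing a separation in $X_k$'s marginal and placing $X_k\in\widehat\pa(Y)$. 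The delicate point is the scheduling: the witnessing pair must still be present in $\calS_k(\ell)$ at the phase where $\gamma(\ell)$ has shrunk enough for the separation to register, so one has to track when the worst value is eliminated against the geometric decay of $\gamma(\ell)$ and confirm that the constants --- in particular the interaction between the $\gamma(\ell)$ used in the elimination rule and the threshold used for flagging --- line up under $\epsilon\le\epsilon_{\min}$. This is a purely deterministic bookkeeping argument on $\mathcal{E}$, but it is where the hypotheses $\epsilon\le\epsilon_{\min}$ and the $\epsilon_{\min}$-separation are used essentially.

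Finally, for the PAC claim on $\hat\x_{\widehat\pa(Y)}$, I would combine $\widehat\pa(Y)=\pa(Y)$ (which holds on $\mathcal{E}$) with Fact~\ref{fact2}: restricting the global action $\hat\x$ to $\pa(Y)$ leaves the interventional mean unchanged, so $\ex[Y\cond\doi(\pa(Y)=\hat\x_{\widehat\pa(Y)})]=\ex[Y\cond\doi(\X_{[K]}=\hat\x)]$. Since Theorem~\ref{thm:sample.complexity} already certifies that $\hat\x$ is $\epsilon$-optimal among global interventions, and Fact~\ref{fact1} identifies the global optimum with the best intervention on $\pa(Y)$, the restricted intervention is $\epsilon$-optimal on the same event, which gives the $(\epsilon,\delta)$-PAC guarantee.
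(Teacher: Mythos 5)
Your scaffolding---conditioning on the good event from the proof of Theorem~\ref{thm:sample.complexity}, showing non-parents are never flagged, and deducing the final PAC claim from Fact~\ref{fact2} together with the $(\epsilon,\delta)$-PAC guarantee of Theorem~\ref{thm:sample.complexity}---coincides with the paper's proof, and those parts are fine. The genuine gap is exactly the step you defer as ``purely deterministic bookkeeping'': showing that every parent gets flagged. It is not bookkeeping, and the plan as you state it fails. The elimination rule removes a value $j$ from $\calS_k(\ell)$ as soon as $\hat\Delta_k^j \geq \gamma(\ell)$, whereas your flagging criterion requires some pair in $\calS_k(\ell)$ whose empirical difference exceeds $2\gamma(\ell)$. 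Take a parent $X_k$ whose witnessing pair has true gap $\Delta \geq \epsilon_{\min}$, and any phase with $\gamma(\ell) \geq \Delta/3$ (e.g.\ the first phase, where $\gamma(1) \approx 2B$). An empirical gap lying in $[\gamma(\ell), 2\gamma(\ell)]$ is simultaneously consistent with the confidence bound $|\hat\Delta - \Delta| \leq \gamma(\ell)$ (the intersection $[\Delta-\gamma(\ell), \Delta+\gamma(\ell)] \cap [\gamma(\ell), 2\gamma(\ell)]$ is nonempty precisely when $\gamma(\ell) \geq \Delta/3$), large enough to eliminate the worse value of the pair, and too small to trip the $2\gamma(\ell)$ flag. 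Once that value is eliminated, the witnessing pair no longer exists in any later $\calS_k(\ell')$, and $X_k$ can never be flagged. So ``the witnessing pair must still be present when $\gamma(\ell)$ has shrunk enough''---the condition you yourself identify as delicate---is exactly what you cannot guarantee, and no amount of tracking constants under $\epsilon \leq \epsilon_{\min}$ repairs it.

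The paper closes this hole by treating elimination itself as evidence of parenthood and by a case split your proposal never makes. If the algorithm runs all $L$ phases, the witnessing pair cannot both survive to the end, since their true gap $\geq \epsilon_{\min} \geq \epsilon$ is incompatible with the final tolerance among surviving values, so a separation must have registered at some phase; if instead the algorithm terminates early via the $\overline\calP_Y$ (or $\hat\calP_Y = K$) break, the paper argues that any variable whose marginal has collapsed to a singleton must be a parent, because on the good event no value of a non-parent is ever eliminated. Your detection mechanism watches only pairs that are still alive and ignores the early-termination branch entirely, so even the easy direction would not cover the algorithm as actually stated. To repair your write-up you would need (a) to count an elimination in $\calS_k$ as a detection of $X_k$ (this is how the paper's rule is meant to operate, notwithstanding the factor-of-$2$ mismatch between the elimination threshold $\gamma(\ell)$ and the $2\gamma(\ell)$ appearing in the displayed definition of $\widehat\pa(Y)$), and (b) to handle the early-termination case separately, as the paper does.
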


\section{Experiments}\label{sec:experiments}
This section presents an empirical evaluation of the \MODL algorithm on a collection of randomly generated causal additive models\footnote{Code has been released at \url{https://github.com/deepmind/additive_cbug}.}. Additional experiments studying the effect of graph structure and the sensitivity to the additive outcome assumption's violation can be found in Appendix~\ref{appendix:experiments}.

\textbf{Baselines.} 
Since we are the first to consider the general setting of unknown $\graph$  without assumptions on its structure, it is difficult to compare \MODL to other algorithms in the causal bandit literature. The closest algorithms are those of \citet{de2022causal} and \citet{bilodeau2022adaptively} with the separating set taken to be all intervenable random variables. In this settings, these algorithms reduces to a multi-armed bandit on the full, product action space. As the number of actions is exponential in the number of variables, we were only able to include this baseline for experiment with few variables. Since these algorithms were designed for the cumulative regret setting, we have implemented a version using Successive Elimination (SE) \cite{even2006action}. 

We also compare \MODL to (i)
a \emph{parents-first (P1) method} which first performs hypothesis testing to find an approximate parents set $\hat\pa(Y)$ and then runs Algorithm~\ref{algorithm:MODL} with $\X_{[K]}=\hat\pa(Y)$ (i.e.\ considering $\hat\pa(Y)$ as the intervention set), and (ii) an \emph{oracle method} which runs Algorithm~\ref{algorithm:MODL} with $\X_{[K]}=\pa(Y)$. Fact~\ref{fact2} guarantees that intervening on $\pa(Y)$ alone is sufficient to solve the problem; therefore, the difference in performance between \MODL and the oracle method quantifies the value of knowing the parents. Comparing \MODL with the P1 method answers whether spending samples to explicitly learn the parents is efficient.

For learning $\pa(Y)$, we were not able to find any suitable algorithm in the literature that exploits the ability to intervene on all $\X_{[K]}$. Thus, we designed our own algorithm for finding an approximate parents set $\hat\pa(Y)$ using global interventions. Let $\x_0\in \supp(\X_{[K]})$ be some fixed intervention; for each $k$ in some random order, we enumerate $j\in[M_k]$ and test a null hypothesis of $\ex[Y\cond\doi(\X_{[K]} = \x_0)] = \ex[Y\cond\doi(X_k = j, \X_{[K]}\setminus\{X_k\} = \x_0)]$; $X_k$ is added to $\hat \pa(Y)$ only if we find a $j$ where the null hypothesis is rejected. We terminate early if $\hat\pa(Y)$ is large enough to meet a bound on $\calP_Y$. Provided that, for all $X_k\notin\pa(Y)$, there exists some $j$ with $|f_k(j)|\geq\epsilon$, this algorithm is guaranteed to find $\pa(Y)$ with high probability. Pseudocode and a complexity bound are provided in Appendix~\ref{appendix:parents.test}. 

\textbf{Experimental Set-up.}
We performed the evaluation on randomly sampled structural causal models (SCMs) generated as followed. The causal graph, excluding $Y$, was a sampled directed acyclic graph from the Erd\H{o}s-R\`{e}nyi 
model with the degree 3 and $K-1$ variables. We randomly choose set of variables of size $\calP_Y$ as the parents of $Y$, then each variable topologically greater than $Y$ is independently set as a child to $Y$ with probability $.5$.

To sample the conditional probability distributions, we chose $M_k$ uniformly between specified upper and lower bounds and generated the conditional probability distribution for each $X_k$ by sampling $p(X_k=j\cond\pa(X_k) = \x)\propto \mathrm{Beta}(2,5)$ independently for all $j$ and $\x$. Finally, we generated $f_k(j) = B W_k^j$, with $B=5$ and $W_k^j$ sampled i.i.d. from $\mathrm{Beta}(2, 5)$ and set $\eta$ to a standard normal variable. If $X_j$ had $Y$ as a parent, we used the same construction but with $Y$ rounded to an integer.

\textbf{Results.}
Using $\epsilon = 1/2$ and $\delta = .1$ for our $(\epsilon,\delta)$-PAC criterion, we considered a variety of settings of $\calP_Y$, $K$, and upper and lower bounds for $M_k$. Each point all graphs corresponds to the average over 20 different SCMs sampled using the process described above and 50 independent runs of the methods on independently generated data.

\begin{figure}[H]
\centering
\includegraphics[width=6.5cm,height=5cm]{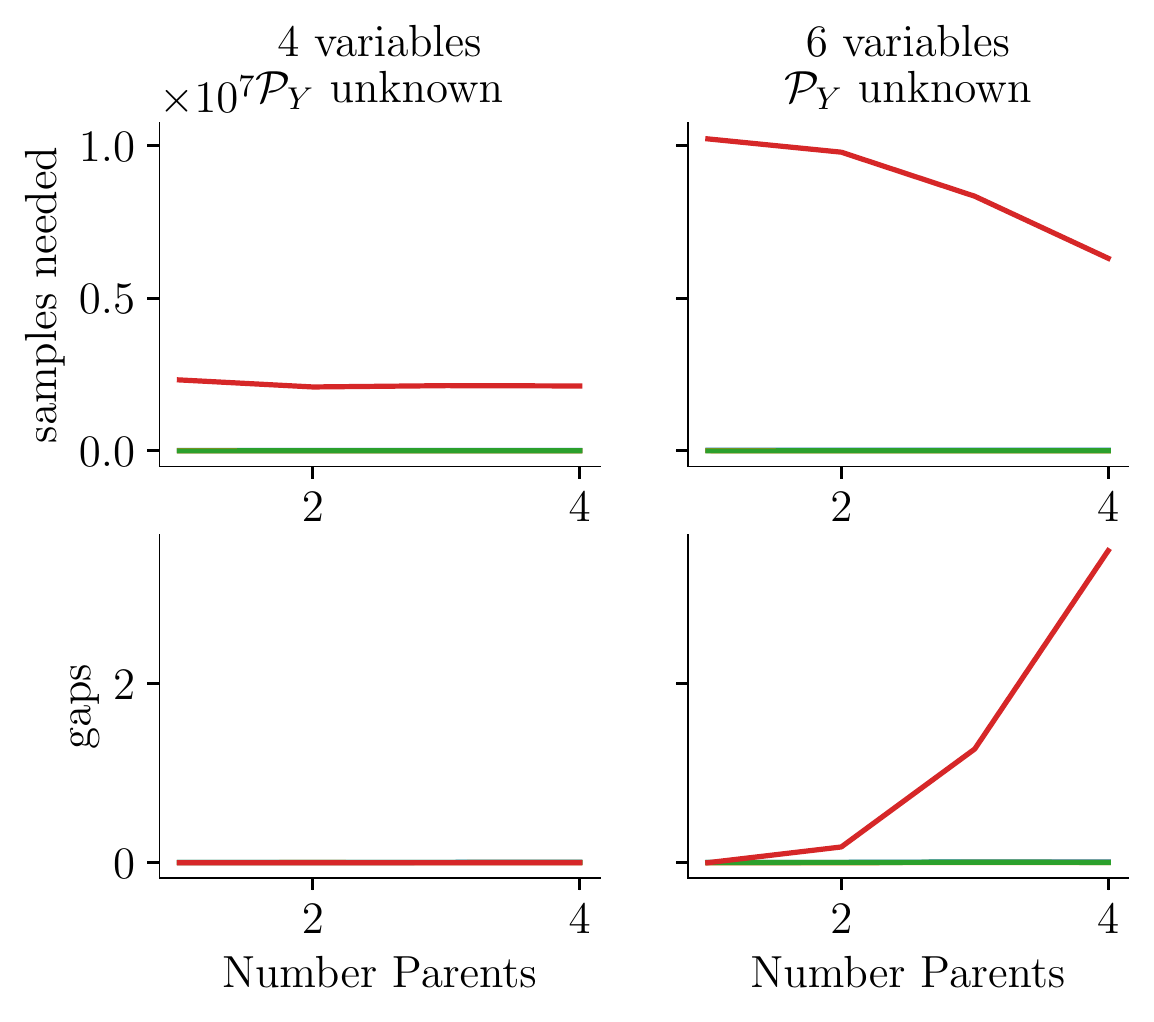}
\caption{Sample complexities including SE.} 
\vspace{-.5cm}
\label{fig:with_SE}
\end{figure}
In the figure above, the sample complexity and the average gaps are plotted for the Successive Elimination baseline (in red) as well as MODL, parents first, and the oracle methods. The SE baseline are almost too large to be comparable (roughly 200 times the other methods, all which appear comparatively as zero) and does not scale to more than a few variables. The sample complexity decreases with the number of parents as a greater portion of arms are able to be eliminated.

Figure~\ref{fig:var_parents} plots the same, without SE, for more interesting numbers of variables in four different combinations of $K=\{10,30\}$ and known/unknown $\calP_Y$ (the lower and upper bounds for $M_k$ are $3$ and $6$). As predicted by Theorem~\ref{thm:sample.complexity}, the sample complexity decreases with $\calP_Y$: many samples are required to distinguish between non-parents and a potential parents with $\theta_k^i\approx\epsilon$, so the complexity increases with the number of non-parents. Overall, the performance of \MODL is much closer to the performance of the oracle method. We see that the performance coincides when $\calP_Y = K$ since \MODL and the oracle method become the same algorithm. We also note that the P1 method does not benefit from $\calP_Y = K$. The P1 method also has consistently higher gaps (since, on occasion, it fails to identify a parent which would cause a large error), but all gaps are well within the desired error tolerance of $\epsilon = 1/2$. 
See the appendix for more figures: e.g.\ 
Figure~\ref{fig:var_hardness} plots the sample complexity for $K=30$ versus the support sizes $M_k$.

To summarize, we found that across all the settings that we investigated \MODL was substantially better than the P1 method, and its performance (in terms of the gap of the final action and the sample complexity) was closer to the performance of the oracle method than to the performance of the P1 method. Hence, we conclude that the penalty of not knowing the parents is relatively small and much smaller than the cost of learning the parents first.

\section{Discussion}
In this paper, we have proposed an approach to solving the causal bandit problem under the general setting of an unknown causal graph (\CBUG). Using the key insight that having no latent confounding between $Y$ and any of its ancestors implies that a global intervention is equivalent to an intervention on the optimal set $\pa(Y)$, we showed that an additive outcome assumption allows us to solve the \CBUG problem as a combinatorial linear bandit.

Limiting our algorithm to marginal action sets alleviated the computational burden by providing an easy approximation to the optimal design problem and a factorization of the action set. Two immediate direction for improving our algorithm and analysis is to consider the quality of approximation in the rejection procedure. A rejection procedure that outputs a marginal action set must reject fewer points than an unconstrained procedure. Are there principled ways of interpolating between marginal and full actions sets, perhaps using unions of marginal sets, which would let us trade-off computation and a larger number of rejected actions? The bound presented in this paper decomposes by variable, but a more nuanced rejection procedure and analysis should involve how the gaps between variables relate.

Other possible extensions include: (i) relaxing the additive outcome assumption, for example by adding ``interaction terms,'' (ii)  investigating what assumption allow for algorithms that adapt to sparsity, and (iii) considering the linear and continuous case. More generally, how can our method generalize to the case of latent confounders between $Y$ and its ancestors? 

Finally, we contrast our setting with the most common setting of causal inference where the goal is to learn the causal graph with as few experiments as possible. This setting is typical in the infinite data case, so the problem difficulty is measured in the number of experiments. As we are in the finite sample case, the number of statistical units (i.e.\ samples or rounds) is the most important quantity. With this distinction in mind, intervening on many variables is justified so long as we can reduce the sample complexity. However, budgeted versions of the causal bandit problem have been considered \citep{nair2021budgeted} and are another interesting direction.

\paragraph{Acknowledgements}
We wish to thank Eleni Sgouritsa for help in designing the code and technical support and Yasin Abbasi-Yadkori for technical feedback and many fruitful discussions.

\bibliography{bib}

\begin{thebibliography}{26}
\providecommand{\natexlab}[1]{#1}
\providecommand{\url}[1]{\texttt{#1}}
\expandafter\ifx\csname urlstyle\endcsname\relax
  \providecommand{\doi}[1]{doi: #1}\else
  \providecommand{\doi}{doi: \begingroup \urlstyle{rm}\Url}\fi

\bibitem[Allen-Zhu et~al.(2021)Allen-Zhu, Li, Singh, and Wang]{allen2021near}
Allen-Zhu, Z., Li, Y., Singh, A., and Wang, Y.
\newblock Near-optimal discrete optimization for experimental design: A regret
  minimization approach.
\newblock \emph{Mathematical Programming}, 186:\penalty0 439--478, 2021.

\bibitem[Bilodeau et~al.(2022)Bilodeau, Wang, and Roy]{bilodeau2022adaptively}
Bilodeau, B., Wang, L., and Roy, D.~M.
\newblock Adaptively exploiting d-separators with causal bandits.
\newblock In \emph{Advances in Neural Information Processing Systems}, 2022.

\bibitem[B{\"u}hlmann et~al.(2014)B{\"u}hlmann, Peters, and
  Ernest]{buhlmann2014cam}
B{\"u}hlmann, P., Peters, J., and Ernest, J.
\newblock Cam: Causal additive models, high-dimensional order search and
  penalized regression.
\newblock \emph{The Annals of Statistics}, 42\penalty0 (6):\penalty0
  2526--2556, 2014.

\bibitem[Chen et~al.(2014)Chen, Lin, King, Lyu, and
  Chen]{chen2014combinatorial}
Chen, S., Lin, T., King, I., Lyu, M.~R., and Chen, W.
\newblock Combinatorial pure exploration of multi-armed bandits.
\newblock In \emph{Advances in Neural Information Processing Systems},
  volume~27, 2014.

\bibitem[Constantinou \& Dawid(2017)Constantinou and
  Dawid]{constantinou2017extended}
Constantinou, P. and Dawid, A.~P.
\newblock Extended conditional independence and applications in causal
  inference.
\newblock \emph{Annals of Statistics}, 45\penalty0 (6):\penalty0 2618--2653,
  2017.

\bibitem[De~Kroon et~al.(2022)De~Kroon, Mooij, and Belgrave]{de2022causal}
De~Kroon, A., Mooij, J., and Belgrave, D.
\newblock Causal bandits without prior knowledge using separating sets.
\newblock In \emph{Conference on Causal Learning and Reasoning}, pp.\
  407--427. PMLR, 2022.

\bibitem[Du et~al.(2020)Du, Kakade, Wang, and Yang]{du2020good}
Du, S.~S., Kakade, S.~M., Wang, R., and Yang, L.~F.
\newblock Is a good representation sufficient for sample efficient
  reinforcement learning?
\newblock In \emph{International Conference on Learning Representations}, 2020.

\bibitem[Du et~al.(2021)Du, Kuroki, and Chen]{du2021combinatorial}
Du, Y., Kuroki, Y., and Chen, W.
\newblock Combinatorial pure exploration with full-bandit or partial linear
  feedback.
\newblock In \emph{Proceedings of the AAAI Conference on Artificial
  Intelligence}, volume~35, pp.\  7262--7270, 2021.

\bibitem[Even-Dar et~al.(2006)Even-Dar, Mannor, Mansour, and
  Mahadevan]{even2006action}
Even-Dar, E., Mannor, S., Mansour, Y., and Mahadevan, S.
\newblock Action elimination and stopping conditions for the multi-armed bandit
  and reinforcement learning problems.
\newblock \emph{Journal of Machine Learning Research}, 7\penalty0 (6):\penalty0
  1079--1105, 2006.

\bibitem[Fiez et~al.(2019)Fiez, Jain, Jamieson, and
  Ratliff]{fiez2019sequential}
Fiez, T., Jain, L., Jamieson, K.~G., and Ratliff, L.
\newblock Sequential experimental design for transductive linear bandits.
\newblock In \emph{Advances in Neural Information Processing Systems},
  volume~32, 2019.

\bibitem[Gabillon et~al.(2011)Gabillon, Ghavamzadeh, Lazaric, and
  Bubeck]{gabillon2011multi}
Gabillon, V., Ghavamzadeh, M., Lazaric, A., and Bubeck, S.
\newblock Multi-bandit best arm identification.
\newblock In \emph{Advances in Neural Information Processing Systems},
  volume~24, 2011.

\bibitem[Hastie(2017)]{hastie2017generalized}
Hastie, T.~J.
\newblock Generalized additive models.
\newblock In \emph{Statistical models in S}, pp.\  249--307. Routledge, 2017.

\bibitem[Imbens \& Rubin(2015)Imbens and Rubin]{imbens2015causal}
Imbens, G.~W. and Rubin, D.~B.
\newblock \emph{Causal inference in statistics, social, and biomedical
  sciences}.
\newblock Cambridge University Press, 2015.

\bibitem[Lattimore et~al.(2016)Lattimore, Lattimore, and
  Reid]{lattimore2016causal}
Lattimore, F., Lattimore, T., and Reid, M.~D.
\newblock Causal bandits: Learning good interventions via causal inference.
\newblock In \emph{Advances in Neural Information Processing Systems}, pp.\
  1181--1189, 2016.

\bibitem[Lattimore \& Szepesv{\'a}ri(2020)Lattimore and
  Szepesv{\'a}ri]{lattimore2020bandit}
Lattimore, T. and Szepesv{\'a}ri, C.
\newblock \emph{Bandit Algorithms}.
\newblock Cambridge University Press, 2020.

\bibitem[Lee \& Bareinboim(2018)Lee and Bareinboim]{lee2018structural}
Lee, S. and Bareinboim, E.
\newblock Structural causal bandits: where to intervene?
\newblock In \emph{Advances in Neural Information Processing Systems}, pp.\
  2568--2578, 2018.

\bibitem[Lu et~al.(2020)Lu, Meisami, Tewari, and Yan]{lu2020regret}
Lu, Y., Meisami, A., Tewari, A., and Yan, W.
\newblock Regret analysis of bandit problems with causal background knowledge.
\newblock In \emph{Conference on Uncertainty in Artificial Intelligence}, pp.\
  141--150. PMLR, 2020.

\bibitem[Lu et~al.(2021)Lu, Meisami, and Tewari]{lu2021causal}
Lu, Y., Meisami, A., and Tewari, A.
\newblock Causal bandits with unknown graph structure.
\newblock In \emph{Advances in Neural Information Processing Systems},
  volume~34, pp.\  24817--24828, 2021.

\bibitem[Maeda \& Shimizu(2021)Maeda and Shimizu]{maeda2021causal}
Maeda, T.~N. and Shimizu, S.
\newblock Causal additive models with unobserved variables.
\newblock In \emph{Uncertainty in Artificial Intelligence}, pp.\  97--106,
  2021.

\bibitem[Maiti et~al.(2022)Maiti, Nair, and Sinha]{maiti2022causal}
Maiti, A., Nair, V., and Sinha, G.
\newblock A causal bandit approach to learning good atomic interventions in
  presence of unobserved confounders.
\newblock In \emph{Uncertainty in Artificial Intelligence}, pp.\  1328--1338.
  PMLR, 2022.

\bibitem[Nair et~al.(2021)Nair, Patil, and Sinha]{nair2021budgeted}
Nair, V., Patil, V., and Sinha, G.
\newblock Budgeted and non-budgeted causal bandits.
\newblock In \emph{International Conference on Artificial Intelligence and
  Statistics}, pp.\  2017--2025. PMLR, 2021.

\bibitem[Pearl(2000)]{pearl2000causality}
Pearl, J.
\newblock \emph{Causality: Models, Reasoning, and Inference}.
\newblock Cambridge University Press, 2000.

\bibitem[Soare et~al.(2014)Soare, Lazaric, and Munos]{soare2014best}
Soare, M., Lazaric, A., and Munos, R.
\newblock Best-arm identification in linear bandits.
\newblock In \emph{Advances in Neural Information Processing Systems},
  volume~27, 2014.

\bibitem[Tao et~al.(2018)Tao, Blanco, and Zhou]{tao2018best}
Tao, C., Blanco, S., and Zhou, Y.
\newblock Best arm identification in linear bandits with linear dimension
  dependency.
\newblock In \emph{International Conference on Machine Learning}, pp.\
  4877--4886, 2018.

\bibitem[Xiong \& Chen(2023)Xiong and Chen]{xiong2023combinatorial}
Xiong, N. and Chen, W.
\newblock Combinatorial pure exploration of causal bandits.
\newblock In \emph{The Eleventh International Conference on Learning
  Representations}, 2023.

\bibitem[Xu et~al.(2018)Xu, Honda, and Sugiyama]{xu2018fully}
Xu, L., Honda, J., and Sugiyama, M.
\newblock A fully adaptive algorithm for pure exploration in linear bandits.
\newblock In \emph{International Conference on Artificial Intelligence and
  Statistics}, pp.\  843--851, 2018.

\end{thebibliography}
\bibliographystyle{icml2023}

\newpage
\appendix
\onecolumn

\section{A simple test for the parents of \texorpdfstring{$Y$}{Y}}\label{appendix:parents.test}
In the experiments, we compared MODL with an algorithm that first learns the parents of $Y$ then ran a bandit algorithm on the remaining variables.

The algorithm is intuitively simple: we will construct confidence intervals of width $\epsilon/2$ for $\ex[Y\cond\doi(X_k=j, \X_{-k} = \x_{-k})]$ for all values of $j$, and if the intersection of the confidence intervals do not overlap, then one of the values of $X_k$ is statistically significantly different from the others. Because we have intervened to fix all the other variables, this difference must be because $X_k$ is a parent of $Y$. Formally, we have the following lemma.
\begin{lemma}
Let $\x_0\in \supp(\X)$ be fixed, and $\x_{-k}$ be $\x_0$ with the $k$th variable's value removed. For $\delta \in (0,1)$ and $\epsilon>0$, assume that $Y_k^1,\ldots, Y_k^{n_k}\sim p(Y\cond\doi(X_k=j, \X_{-k} = \x_{-k})$, where
\[
    n_k \df \left\lceil
    \frac{8\sigma^2}{\epsilon^2}\logf{2 K \supp(X_k)}{\delta}
    \right\rceil.
\]
Then 
\[
\prob\left(
\forall 1\leq k\leq K, j\in\supp(X_k),
\left| \frac{\sum_{i=1}^{n_k} Y_k^i}{n_k} - 
\ex[Y\cond\doi(X_k=j, \X_{-k} = \x_{-k})]\right|
\leq \frac{\epsilon}{2} 
\right)\geq 1-\frac{1}{\delta}.
\]
Hence, by the union bound, all the confidence intervals are simultaneously correct with probability at least $1-\delta$.

\end{lemma}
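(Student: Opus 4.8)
The plan is to treat this as a standard sub-Gaussian concentration bound for a sample mean followed by a union bound; there is no deep structure to exploit beyond Assumption~\ref{additive}. The key observation is that, under the global intervention $\doi(X_k=j, \X_{-k}=\x_{-k})$, every variable in $\X_{[K]}$ is held fixed, so the additive outcome model $Y=\sum_m f_m(X_m)+\eta$ reduces to a deterministic constant plus the noise term $\eta$. In the homoscedastic case $\eta$ is i.i.d.\ $\sigma^2$-sub-Gaussian, so each observation $Y_k^i$ is $\sigma^2$-sub-Gaussian about its mean $\ex[Y\cond\doi(X_k=j,\X_{-k}=\x_{-k})]$, and the $n_k$ samples are i.i.d.

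First I would fix a single pair $(k,j)$ and apply the standard fact that the empirical mean of $n_k$ i.i.d.\ $\sigma^2$-sub-Gaussian variables is $\sigma^2/n_k$-sub-Gaussian, which yields the two-sided tail
\[
\prob\left(\left|\frac{1}{n_k}\sum_{i=1}^{n_k} Y_k^i - \ex[Y\cond\doi(X_k=j,\X_{-k}=\x_{-k})]\right| \geq \frac{\epsilon}{2}\right) \leq 2\exp\left(-\frac{n_k\epsilon^2}{8\sigma^2}\right).
\]
Substituting the prescribed $n_k=\lceil \tfrac{8\sigma^2}{\epsilon^2}\log(2K|\supp(X_k)|/\delta)\rceil$ (the ceiling only shrinks the tail) makes the exponent cancel the logarithm, so the right-hand side is at most $2\cdot\frac{\delta}{2K|\supp(X_k)|}=\frac{\delta}{K|\supp(X_k)|}$.

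Then I would union-bound over all $K$ variables and all $j\in\supp(X_k)$. The number of events is $\sum_{k=1}^K |\supp(X_k)|$, and the per-event failure probability $\delta/(K|\supp(X_k)|)$ is designed so that the inner sum over $j$ contributes exactly $\delta/K$ per variable; summing over $k$ telescopes to $\delta$. Hence all the confidence intervals hold simultaneously with probability at least $1-\delta$.

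The calculation is entirely routine; the only genuine subtleties are (i) justifying the sub-Gaussian claim, which relies on the additive assumption together with the homoscedastic i.i.d.\ noise assumption rather than on any bound on the magnitude of $Y$ itself, and (ii) noting that the probability $1-\tfrac{1}{\delta}$ appearing in the displayed statement is a typo for $1-\delta$, as the subsequent sentence and the choice of $n_k$ both confirm.
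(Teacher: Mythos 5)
Your proof is correct and takes essentially the same route as the paper's: a sub-Gaussian concentration bound for each sample mean (the paper invokes its Lemma~\ref{lem:azuma}, which specialized to a sample mean is exactly the Hoeffding-type bound you use), with the choice of $n_k$ cancelling the logarithm, followed by a union bound over all $K$ variables and all $j\in\supp(X_k)$. You are also right that the displayed $1-\tfrac{1}{\delta}$ is a typo for $1-\delta$; in fact your handling of the two-sided factor and the per-event failure probability is more careful than the paper's own proof, which contains the analogous typo $1-\tfrac{2K\supp(X_k)}{\delta}$ where $1-\tfrac{\delta}{K\supp(X_k)}$ is meant.
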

\begin{proof}
With the $n_k$ defined in the lemma, we can verify that
\[
\sqrt{\frac{2\sigma^2}{n_k}\logf{2 K \supp(X_k)}{\delta}} \leq \frac{\epsilon}{2}.
\]
Thus, Lemma~\ref{lem:azuma} implies that
\[
\prob\left(
\left| \frac{\sum_{i=1}^{n_k} Y_k^i}{n_k} - 
\ex[Y\cond\doi(X_k=j, \X_{-k} = \x_{-k})]\right|
\leq \frac{\epsilon}{2}
\right)\geq 1-\frac{2 K \supp(X_k)}{\delta}.
\]
Finally, by the union bound, all the confidence intervals are simultaneously correct with probability at least $1-\delta$, as claimed.
\end{proof}

Under the event that the bounds are all correct, $X_k$ is a parent of $Y$ if there exist two intervals that do not overlap; in this case, the means of the two interventions must be different with high probability. See Algorithm~\ref{alg:parents} for pseudocode. In addition to having few false positives, we can argue that the algorithm has few false negatives, as presented in the following lemma.

\begin{algorithm}
\caption{Finding $\pa(Y)$}
\label{alg:parents}
\begin{algorithmic}
\STATE \textbf{Given:} $\epsilon>0$, $\delta\in(0,1)$, $\sigma^2$, $\x_0\in\supp(\X_{[K]})$ .
\STATE 
$\widehat\pa(Y)\leftarrow \emptyset$
\FOR{$k=1,\ldots, K$:}
    \STATE $C_k\leftarrow\Reals$
    \STATE $n\leftarrow \left\lceil
    \frac{8\sigma^2}{\epsilon^2}\logf{2 K \supp(X_k)}{\delta}
    \right\rceil.$
    \FOR{$j = 1,\ldots, M_k$}
        \STATE  Collect $y^1,\ldots, y^n \sim p(Y\cond\doi(X_k=j, \X_{-i} = \x'))$
        \STATE $C_k\leftarrow C_k \cap \left(\frac{\sum_{i=1}^n y^i}{n} - \frac{\epsilon}{2},
        \frac{\sum_{i=1}^n y^i}{n} + \frac{\epsilon}{2}\right)$        
        \IF{$C_k = \emptyset$}
        \STATE $\widehat\pa(Y)\leftarrow\widehat\pa(Y) \cup \{ X_k\}$
        \STATE Skip the rest of the tests for $X_k$.
        \ENDIF
        \ENDFOR
    \ENDFOR
\STATE Return $\widehat\pa(Y)$
\end{algorithmic}
\end{algorithm}

\begin{lemma}
Assume that for all $X_k\in\pa(Y)$, there exists $i,j\in\supp(X_k)$ such that $|f_k(i) - f_k(j)|\geq \epsilon$. Then, with probability $1-\delta$, Algorithm~\ref{alg:parents} correctly recovers the parents.
\end{lemma}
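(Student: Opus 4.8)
The plan is to condition on the ``good event'' supplied by the preceding lemma and then argue separately that, on that event, the algorithm produces no false positives and no false negatives. Concretely, let $\mu_k^j \df \ex[Y\cond\doi(X_k=j, \X_{-k}=\x_{-k})]$ be the true conditional mean targeted by the $(k,j)$ confidence interval, and let $E$ be the event that every empirical mean computed by Algorithm~\ref{alg:parents} lies within $\epsilon/2$ of its target, i.e.\ $|\bar y_k^j - \mu_k^j|\leq \epsilon/2$ for all $k$ and all $j\in\supp(X_k)$. The preceding lemma guarantees $\prob(E)\geq 1-\delta$, so it suffices to show that on $E$ we have $\widehat\pa(Y)=\pa(Y)$ deterministically.

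The first ingredient is to rewrite each target mean using the additive structure. Because fixing $X_k=j$ together with $\X_{-k}=\x_{-k}$ is a global intervention and there are no latent confounders between $Y$ and its ancestors, Fact~\ref{fact2} together with Assumption~\ref{additive} gives $\mu_k^j = f_k(j)+C_k$, where $C_k \df \sum_{k'\neq k} f_{k'}(\x_{0,k'})$ collects the fixed contributions of the remaining parents and does not depend on $j$; here we read $f_{k'}\equiv 0$ whenever $X_{k'}\notin\pa(Y)$. Thus varying $j$ moves $\mu_k^j$ only through $f_k$.

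For the no-false-positive direction, suppose $X_k\notin\pa(Y)$. Then $\mu_k^j=C_k$ for every $j$, so on $E$ each interval $(\bar y_k^j-\epsilon/2,\bar y_k^j+\epsilon/2)$ contains the common point $C_k$; hence the running intersection $C_k$ maintained by the algorithm never empties and $X_k$ is correctly left out of $\widehat\pa(Y)$. For the no-false-negative direction, suppose $X_k\in\pa(Y)$. By hypothesis there are values $i,j$ with $|f_k(i)-f_k(j)|\geq\epsilon$, hence $|\mu_k^i-\mu_k^j|\geq\epsilon$. The goal is to show the two corresponding intervals are disjoint, which forces the intersection to empty and $X_k$ to be flagged; since the inner loop only terminates early once the intersection is already empty, both $i$ and $j$ are processed (or the set has emptied even sooner), so the flag is raised in either case.

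The step I expect to be the crux is this disjointness claim, because it is where the interval half-width must be reconciled with the assumed separation. On $E$ the two empirical centers satisfy $|\bar y_k^i-\bar y_k^j|\geq |\mu_k^i-\mu_k^j|-|\bar y_k^i-\mu_k^i|-|\bar y_k^j-\mu_k^j|$, and to conclude disjointness of two open intervals of half-width $\epsilon/2$ one needs this lower bound to reach $\epsilon$. Carrying this through is the point requiring care: with a per-side error budget of $\epsilon/2$ the bare separation $\epsilon$ sits exactly on the boundary, so I would either tighten the confidence half-width (enlarging the $8\sigma^2$ constant in $n_k$ so that $|\bar y_k^j-\mu_k^j|\leq\epsilon/4$) or state the separation hypothesis with a comfortable constant factor. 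Once the centers are provably separated by at least the sum of the half-widths, disjointness of the intervals is immediate, and the two directions above combine to give $\widehat\pa(Y)=\pa(Y)$ on $E$, i.e.\ with probability at least $1-\delta$.
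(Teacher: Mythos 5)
The paper never actually proves this lemma: it is stated bare, and the only justification in the text is the sentence preceding Algorithm~\ref{alg:parents} (``$X_k$ is a parent of $Y$ if there exist two intervals that do not overlap''). Your proof is precisely the formalization of that sketch---condition on the good event $E$ of the preceding lemma, use Fact~\ref{fact2} and Assumption~\ref{additive} to write $\mu_k^j = f_k(j) + C_k$ with $C_k$ independent of $j$, deduce no false positives from the common point $C_k$, and no false negatives from disjointness of two intervals---so in terms of approach there is nothing to contrast.

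What you flagged as the crux is a genuine defect of the lemma as stated, not of your argument, and you are right that it cannot be closed without changing a constant somewhere. Concretely, take a parent $X_k$ with $\supp(X_k)=\{i,j\}$, $f_k(i)=0$ and $f_k(j)=\epsilon$ (separation exactly $\epsilon$, which the hypothesis permits), and write $e_i, e_j$ for the two empirical errors. The two open intervals of half-width $\epsilon/2$ intersected by the algorithm are disjoint iff $|\bar y_k^i - \bar y_k^j| \geq \epsilon$, and $\bar y_k^i - \bar y_k^j = -\epsilon + (e_i - e_j)$, so they overlap precisely when $0 < e_i - e_j < 2\epsilon$, i.e.\ (on $E$) whenever $e_i > e_j$. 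For symmetric i.i.d.\ noise this happens with probability roughly $1/2$ no matter how large $n_k$ is, so the algorithm misses this parent with constant probability: the lemma is false at separation exactly $\epsilon$, and no proof could have avoided your difficulty. Both of your repairs work, with one caveat on the first: shrinking only the statistical error to $\epsilon/4$ while the algorithm still intersects intervals of half-width $\epsilon/2$ is not enough (the centers are then only guaranteed to be $\epsilon/2$ apart, whereas disjointness of $\epsilon/2$-half-width intervals requires $\epsilon$); you must also shrink the intersected intervals to half-width $\epsilon/4$, i.e.\ replace $8\sigma^2$ by $32\sigma^2$ in $n_k$ and $\epsilon/2$ by $\epsilon/4$ in Algorithm~\ref{alg:parents}. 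The cleaner fix leaves the algorithm untouched and strengthens the hypothesis to $|f_k(i)-f_k(j)| \geq 2\epsilon$. (A smaller instance of the same boundary care appears in your false-positive direction: if an error equals $\epsilon/2$ exactly, $C_k$ is an endpoint and may lie outside the open intervals, so strictly one should intersect closed intervals; for continuous noise this is a probability-zero event.) Everything else---the decomposition $\mu_k^j = f_k(j)+C_k$ via the global intervention, and the observation that early termination of the inner loop only ever happens after the flag is raised---is correct.
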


\section{Additional Experiments}\label{appendix:experiments}
This section holds additional experiments under the same setup in Section~\ref{sec:experiments}.
\begin{figure*}[t]
\centering
\includegraphics[width=14.3cm,height=3.5cm]{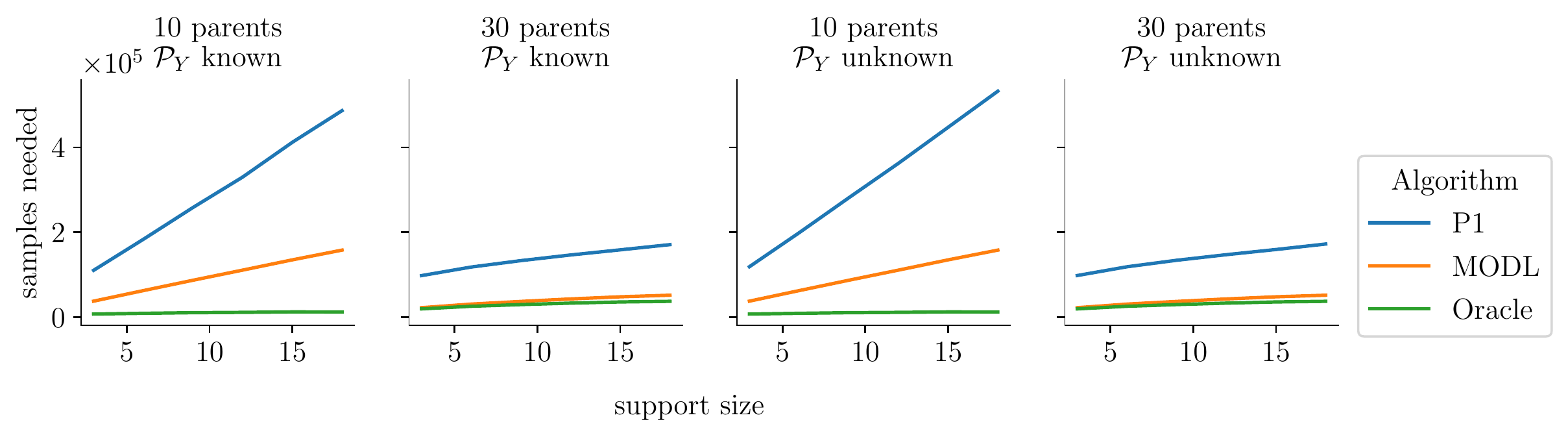}
\caption{Sample complexity versus lower bound on $M_k$.} 
\label{fig:var_hardness}
\end{figure*}

Figure~\ref{fig:var_hardness} plots the sample complexity for $K=30$ versus the support sizes $M_k$, where the x-axis specifies the lower bound on $M_k$ (the upper bound is 3 larger). As above, the performance of \MODL is much closer to the performance of the oracle method than to the performance of the P1 method, and almost identical for $\calP_Y = K = 30$.

Figure~\ref{fig:varying_degree} plots the sample complexity as the degree of the sampled graph changes, when $\calP_Y = 10$ and $K=30$. Unsurprisingly, the degree has very little effect on the sample complexity, confirming our intuition from causal inference that intervening on all parents renders the rest of the causal graph unimportant.
\begin{figure}
\centering
  \includegraphics[height=7cm]{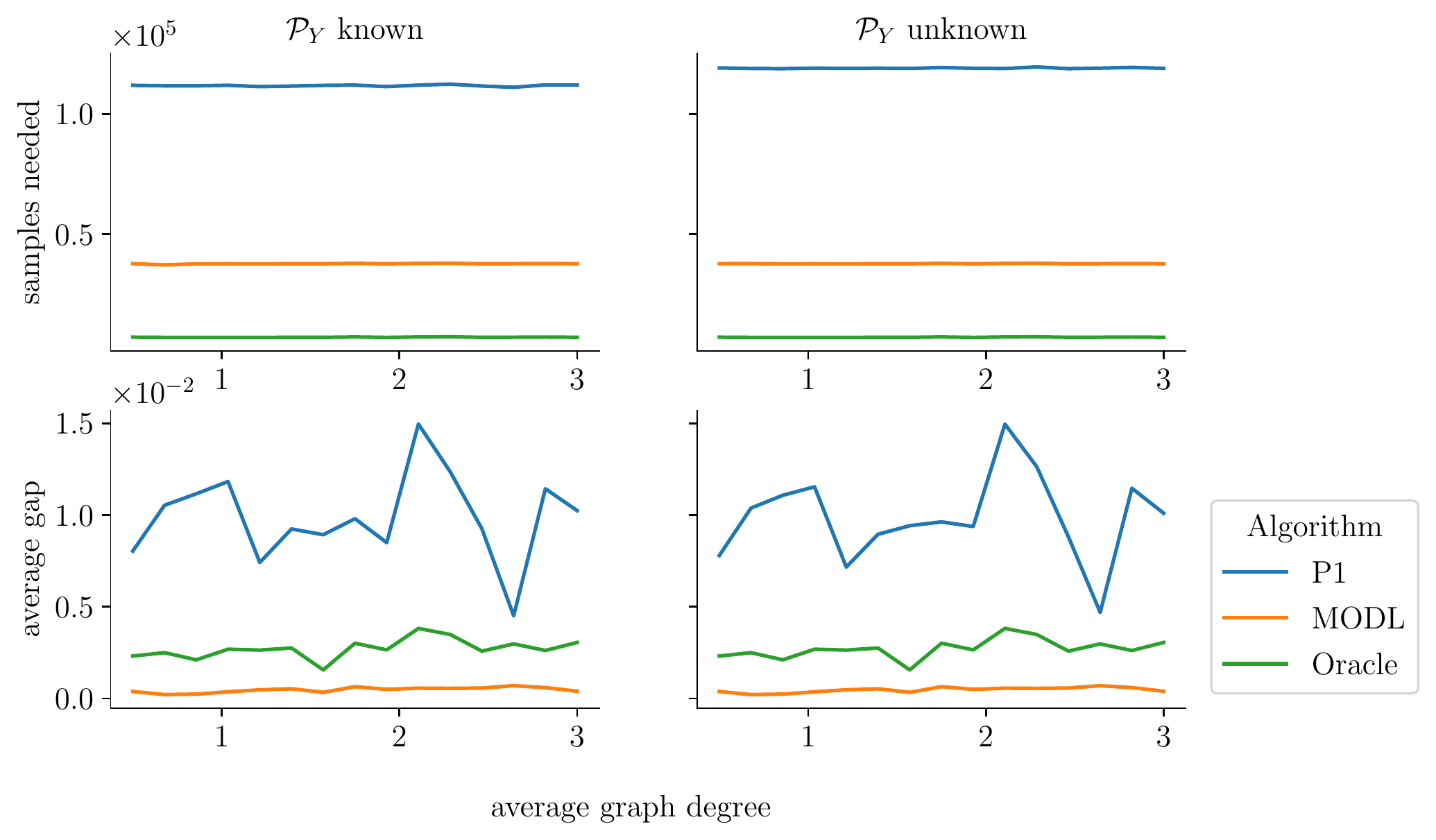}
\caption{Sample complexity and average gap vs. number of variables with 30 variables and 10 parents.} 
\label{fig:varying_degree}
\end{figure}

Figure~\ref{fig:model_mis} confirms our suspicion that the linear bandit is very sensitive to model mispecification. We generated non-linear data by using the outcome model
\[
Y = \sum_{k=1}^{\calP_Y} f_k(X_k) + \alpha B M_{\max}^{-4}
\left( X_{i_1} X_{i_2} X_{i_3} X_{i_4}
+  X_{j_1} X_{j_2} X_{j_3}
+ X_{k_1} X_{k_2}
\right)
\]
for randomly chosen (without replacement) indices $i_1, i_2, i_3, i_4$, $j_1, j_2, j_3$, and $k_1, k_2$ from $[\calP_Y]$ and $M_{\max}$ is the upper bound on the support size (6 in this case). This model was chosen to resemble the effect of adding ``interaction terms'' that are the product of several variables. The leading coefficient is chosen to keep the maximum interaction term roughly $\alpha B$ so choosing $\alpha\in[0,1]$ keeps the scale of the interactions terms roughly equivalent to the additive terms. Despite this scaling, we still find that the performance is very sensitive to model mismatch, as illustrated in Figure~\ref{fig:model_mis}. We also note that the parents-first approach is much more sensitive to model mispecification, at least in the average gap, because the mispecification increases the probability of the parents being misspecified, which in turn causes a large error. MODL and the oracle algorithms are more immune to this effect.
\begin{figure}[t]
\centering
  \includegraphics[height=7cm]{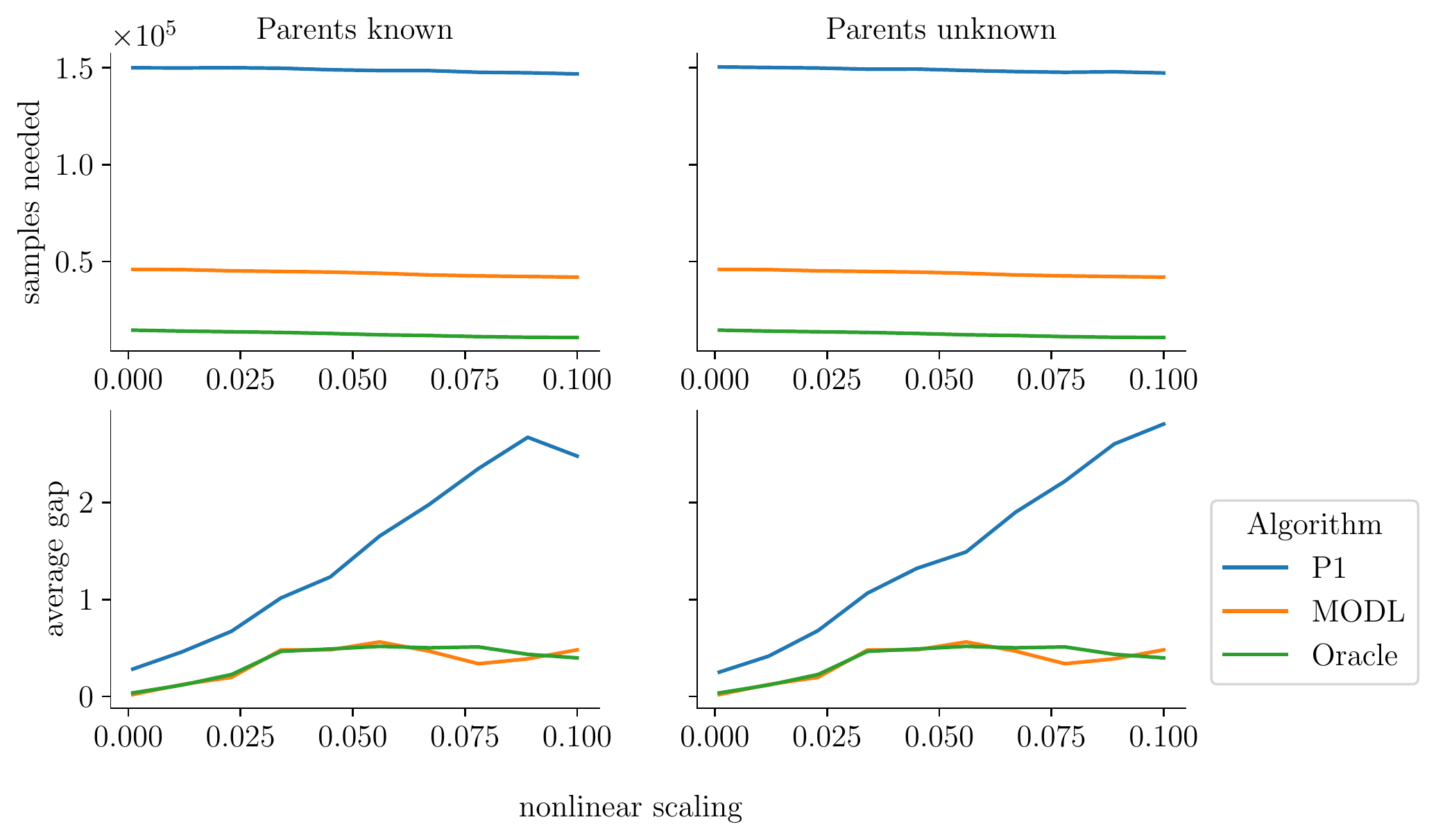}
\caption{Sample complexity and average gap vs. model mispecification. The x-axis details the coefficient of a multiplicative nonlinear term in the expected response function.} 
\label{fig:model_mis}
\end{figure}

\section{Proofs}\label{appendix:proofs}
\begin{proof}[Proof of Theorem~\ref{thm:lower.bound}]
This lower bound can by shown by invoking Theorem~33.5 from \cite{lattimore2020bandit}.

For any $\x\in\supp(\X_{[K]})$, we define $q(\x)$ to be the $\x$ values of $\pa(Y)$; in particular, the reward of any two actions $\x$ and $\x'$ are equal if $q(\x)=q(\x')$. We invoke the theorem for a bandit with $\supp(\X_{[K]})$ arms, one for each action, and with a set $\calE$ of bandit environments indexed by $\x_\pa\in\supp(\pa(Y))$; for a bandit $\nu(\x_pa)\in\calE$ corresponding to $\x_\pa$, we set $Y\cond\doi(\X=\x)\sim\text{Bernoulli}(\frac{1}{2} + \epsilon \indicator{q(\x)=\x_\pa})$. We can explicitly calculate $\calE_{alt}(\nu(\x_\pa))=\{\nu(\x_\pa'): \x_\pa'\neq \x_\pa\}$, which are the set of bandit environments with a different optimal arm than $\nu$'s. We also use $\nu_x$ to indicate the reward distribution of action $\x$ under bandit $\nu$. Then, examining the terms in the theorem, we need to calculate
\begin{align*}
c(\nu)
&=
\max_{\alpha\in\triangle_{\supp(\X_{[K]})}}
\left(
    \min_{\nu'\in\calE_{alt}(\nu)} \sum_{\x\in\supp(\X_{[K]})}\alpha_{\x} D(\nu_\x,\nu_\x')
\right),
\end{align*}
where $D(\cdot,\cdot)$ is the relative entropy.
If $\nu'\in\calE_{alt}(\nu)$, this means that the $\x_\pa'$ corresponding to $\nu'$ is different from the $\x_\pa$ corresponding to $\nu$. Fix one $\x_\pa'$; we can see that $D(\nu_\x,\nu_\x')$ is zero for all $\x, \x'$ with $q(\x) = q(\x')$. We also use the fact that  $D(\nu_\x,\nu_\x') = O(\epsilon^2)$ when $\epsilon$ is small. Hence 
\[
    \min_{\nu'\in\calE_{alt}(\nu)} \sum_{\x\in\supp(\X_{[K]})}\alpha_{\x}     D(\nu_\x,\nu_\x')
    =
    \min_{\x_\pa'\neq \x_\pa\in \supp(\pa(Y))}
    O(\epsilon^2)\alpha_{\x_\pa'},
\]
where $\x_\pa$ corresponds to $\nu$. Taking the max over $\alpha$, we see that any $\alpha$ must spread mass evenly across all $\x$ with $\x_\pa'\neq \x_\pa\in \supp(\pa(Y))$, which leads to $c(v) = O\left(\frac{\epsilon^2}{|\supp(\pa(Y))|}\right)$. Combining these calculations with the theorem, we find that
\[
\ex[\tau] \geq O\left(\frac{|\supp(\pa(Y))|}{\epsilon^2}\logf{1}{\delta}\right),
\]
where $\ex[\tau]$ is the expected stopping time of any sound (i.e.\ $(\epsilon,\delta)$-PAC) algorithm, as claimed.

We could also follow the techniques from the lower bound of \cite{du2020good} and reduce the index-query problem to the \CBUG problem. 
\end{proof}

\begin{proof}[Proof of Lemma~\ref{lem:optimal.design.solution}]
Let $x^1,\ldots, x^n$ be a sequence of actions. To calculate $V_n$, we write 
\[
  V_n = \sum_{i=1}^n 
  (e_1(\x_1^i),\ldots, e_K(\x_K^i))
  (e_1(\x_1^i),\ldots, e_K(\x_K^i))^\top
  = \sum_{i=1}^{n} \sum_{j=1}^K \sum_{j'=1}^K
  e_j(\x_j^i)e_{j'}(\x_{j'}^i)
  =
  D_n
  + 
  C_n,
\]
where $D_n$ is the matrix of on-diagonal components and $C_n$ contains all the off-diagonal terms; both are positive semi-definite. Using the fact that if $A$ and $B$ are PSD matrices, then $x^\top(A + B)^{\dagger}x\leq x^\top A^{\dagger} x$, we can upper bound $\mnorm{e(\x)-e(\x')}{V_n} $ by 
 $\mnorm{e(\x)-e(\x')}{D_n}$. 

Letting $N_k^j = \sum_{t\leq n}\indicator{\x_k^t = j}$ be the round $\x_k^t$ took on the $j$th value, we can show that 
\[
    D_n
    =
    \sum_{t\leq n} e(\x^t)e(\x^t)^\top 
    = 
    \sum_{t\leq n} \diag( (e_1(\x_1^t),\ldots, e_K(\x_K^t))
    =
    \diag\left(
    N_1^1, N_1^2,\ldots, N_1^{M_1},
    N_2^1,\ldots, N_K^{M_K}
    \right);
\]
$D_n$ is the diagonal matrix of the counts of the values. We can upper bound the optimal design problem over $\x^{[n]}$ with another problem over counts of values that appears in $\calS$ and add to $n$. Formally, this set is 
\[
    \calN(\calS,n)\df \left\{(N_1^1,N_1^2,\ldots,N_K^{M_K}): \forall k, N_k^j=0\text{ if } j\notin \calS_k \text{ and } \sum_j N_k^j=n\right\}.
\]

We can easily check that, for any $\x, \x'\in\calS$, 
\begin{align*}
    \mnorm{e(\x)-e(\x')}{D_n} 
    &=
    (e(\x)-e(\x'))^\top V_n^{\dagger} (e(\x)-e(\x'))\\
    &=
    \sum_k
    (e_k(\x_k) - e_k(\x_k'))^\top
    \left(\frac{e_k(\x_k)}{N_k^{\x_k}} - \frac{e_k(\x_k')}{N_k^{\x_k'}}\right)\\
    &=
    \sum_k
    \left(\frac{1}{N_k^{\x_k}}
    + \frac{1}{N_k^{\x_k'}}\right)\indicator{\x_k \neq \x_k'}.
\end{align*}
Thus, the $\mathcal{XY}$-optimal has an upper bound
\begin{align*}
    \x^{\mathcal{XY}}(\mathcal S, n)
    &=
    \arg\min_{\{N_i^j\}\in \calN(\calS,n)}
    \max_{\x, \x'\in\calS} \mnorm{e(\x)-e(\x')}{V_n}\\
    &=  
    \arg\min_{\{N_k^j\}\in \calN(\calS,n)}
    \sum_k \max_{j,j'\in\calS_k}\frac{1}{N_k^{j}} + \frac{1}{N_k^{j'}}
    \\
    &=  
    \arg\min_{\{N_k^j\}\in \calN(\calS,n)}
    \sum_k
    \frac{2}{\min_{j\in\calS_k}N_k^{j}}.
\end{align*}
We can solve the problem in closed from: for all $i\leq V$, allocate $N_i^j$ evenly among all $j\in \calS_i$. Because $|\calS_i|$ may not divide $n$, we may have rounding errors and can only guarantee that $(N_i^j)^{-1}\in \left[ \lfloor n/S_i \rfloor, \lceil n/S_i \rceil \right]$, which results in an objective value of
$\sum_i 
\frac{2}{\lfloor n/S_i \rfloor}\leq 2\sum_i \frac{|S_i|}{n - |S_i|}$.

\end{proof}

\begin{namedthm*}{Theorem~\ref{thm:sample.complexity}}
Algorithm~\ref{algorithm:MODL} is $(\epsilon,\delta)$-PAC. The expected sample complexity is
\begin{align*}
    H^\epsilon 
    &\df
    \frac{16}{3}\sigma^2 \logf{\log(B K/\epsilon)}{\delta}
    \left(
        \sum_{k\in\pa(Y)}
  \sum_{i=1}^{M_k}  
  \frac{1}{(\Delta_k^i\wedge(\epsilon/K))^2}  
  + 
  \sum_{k\notin\pa(Y)} M_k \frac{1}{\epsilon^2}
  \right). 
\end{align*}
If the number of parents $\calP_Y$ is provided, the complexity is instead
\[
    H^{\epsilon, \calP_Y} =     \frac{16}{3}\sigma^2 \logf{\log(BK/\epsilon)}{\delta}
    \sum_{k=1}^K
    \sum_{i=1}^{M_k}
    \frac{K^2}{(\Delta_{\min} \wedge \Delta_k^i\wedge(\epsilon/K))^2},
\]
where $\Delta_{\min} = \min_{k\leq \calP_Y}\min_{i\in [M_k]}\Delta_k^i$ is the minimum gap in the parents.
\end{namedthm*}
\begin{proof}[Proof of Theorem~\ref{thm:sample.complexity}]
Recall that MODL alternates between two stages, data-collection and action elimination, and uses an exponentially decreasing error tolerance. Let $\calS_k(\ell)$, $\gamma(\ell)$, and $n_
ell$ be the corresponding the values during phase $\ell$.

It is easy to verify that $\gamma \approx B/2$ for $\ell=1$  and $\gamma = \frac{\epsilon}{2K}$ for $\ell=L$. We will show that, with the chosen $n_\ell$,
\[
    P\left(
    \langle\hat\theta - \theta^*, e(\x) - e(\x')\rangle
    \leq
    \gamma(\ell)
    \forall \x, \x'\in\calS(\ell), \ell \in [L]\right)\geq 1-\delta.
\]
That is, with high probability, all our confidence intervals used by the algorithm are correct.

Let $\calS_k(\ell)$ be the $k$th marginal of $\calS$ during phase $\ell$ of the algorithm. By choosing 
\[
    n_\ell = \left\lceil \frac{4\sigma^2 |\sum_k\calS_k(\ell)|}{\gamma^2}\logf{L}{\delta}\right\rceil,
\]
Lemma~\ref{lem:optimal.design.solution} guarantees that $\max_{z,z'\in\calS}\mnorm{e(\x) - e(\x')}{V_n}\leq \sum_k \frac{2 |\calS_k(\ell)|}{n}$. Lemma~\ref{lem:azuma} provides
\[
    \langle\hat\theta - \theta^*, e(\x) - e(\x')\rangle
    \leq
    \sqrt{2\sigma^2\mnorm{e(\x) - e(\x')}{V_n}\logf{L}{\delta}}
    \leq
    \sqrt{\frac{4\sigma^2\sum_k |\calS_k(\ell)|}{n}\logf{1}{\delta}}
    \leq
    \gamma.
\]
with probability at least $1-\delta$ for all $\x\in\calS(\ell)$ simultaneously.

At each stage $\ell$, the elimination algorithm proceeds only eliminating $\x$ where there exists $\x'$ with $\langle \hat\theta, e(\x') - e(\x)\rangle \geq \gamma(\ell)$. If such a $\x'$ exists, then we can conclude that
\begin{align*}
    \langle \theta^*, e(\x)-e(\x^*)\rangle
    \leq
    \langle \theta^*, e(\x) - e(\x')\rangle
    \leq 
    \langle \hat\theta, e(\x) - e(\x')\rangle - \gamma(\ell)
    \leq 
    0.
\end{align*}
Hence, we have shown that, for all stages $\ell$, the algorithm never eliminates any action that is $\gamma(\ell)$-suboptimal.

The last step to checking correctness is to show that an $\epsilon$-suboptimal action is returned. In round $\ell=L$, Lemma~\ref{lem:azuma},  guarantees that 
$\langle \theta^*, e(\x) - e(\x')\rangle\leq\epsilon/2$ for all $\x,\x'$ in $\calS(\ell)$. Applying this to the action $\hat\x$ returned by the algorithm and the fact that, under the event that the confidence intervals are correct, $\x^*\in \calS(\ell)$, we have
\begin{align*}
    \langle \theta^*, e(\hat\x)-e(\x^*)\rangle
    \leq
    \langle \theta^*-\hat\theta, e(\hat\x)-e(\x^*)\rangle  
    + \langle \hat\theta, e(\hat\x)-e(\x^*)\rangle  
    \leq 
    \frac{K\epsilon}{2K}+\frac{\epsilon}{2},
\end{align*}
where the last step used the fact that each variable's error was controlled to $\epsilon/2k$.

The total sample complexity is
\[
\sum_{\ell=1}^L \sum_{k=1}^K \left|\calS_k(\ell)\right|
\frac{4\sigma^2 }{\gamma(\ell)^2}\logf{L}{\delta}.
\]

We now look to bound $\sum_{\ell=1}^L \sum_{k=1}^K \left|\calS_k(\ell)\right|$ in terms of instance-dependent quantities. With no bound on $|\pa(Y)|$,  the complexity also decomposes. Let $G = \{\gamma_1>\ldots>\gamma_L = \epsilon/2K\}$ be the set of $\gamma$ used by the algorithm. Setting $c = 4\sigma^2 \log(L/\delta)$ and using the fact that $\hat\Delta_k^i$ are all unbiased, the expected sample complexity can be upper bounded by calculating
\begin{align*}
    \sum_{k=1}^K 
    \sum_{\ell=1}^L 
    \frac{c}{\gamma(\ell)^2}
    \left|\left\{\Delta_k^i\leq \gamma(\ell)
    \right\}\right|
&=
    \sum_{k=1}^K 
    \sum_{\gamma\in G}
    \sum_{i=1}^{M_k}
    \frac{c}{\gamma^2}
    \indicator{\Delta_k^i\leq \gamma}\\
&=
    \sum_{k=1}^K 
    \sum_{i=1}^{M_k}
    \sum_{\{\gamma\in G: \gamma\geq \Delta_k^i\wedge \epsilon/K\}}
    \frac{c}{\gamma^2}\\
&\leq
    \sum_{k=1}^K
    \sum_{i=1}^{M_k}
    \sum_{j\geq 0}
    \frac{c}{2^{2j}\left(\Delta_k^i\wedge\frac{\epsilon}{K}\right)^2}
=
    \sum_{k=1}^K 
    \sum_{i=1}^{M_k}
    \frac{4c}{3\left(\Delta_k^i\wedge\frac{\epsilon}{K}\right)^2}
    \\
&=
  \sum_{k\in\pa(Y)}
  \sum_{i=1}^{M_k}
  \frac{4c}{3(\Delta_k^i\wedge\frac{\epsilon}{K})^2}  
  + 
  \sum_{k\notin\pa(Y)} M_k \frac{4cK^2}{3\epsilon^2}  
\end{align*}

To summarize, the sample complexity is the sum the sample complexity of the parents with a linear term for the non-parents. Intuitively, it is very difficult to differentiate between a parent with some $|f_k(\cdot)|\geq \epsilon$ and a non-parent, so we expect to see these terms in the sample complexity. 

Recall that $\Delta_{\min} = \min_{k\in\pa(Y)}\min_{i\in [M_k]}\Delta_k^i$. When the number of parents is given, the algorithm will terminate as soon as $\gamma(\ell) \leq \Delta_{\min}$. Hence, the sample complexity can be decomposed into the samples needed to learn the parents, 
$\sum_{k\in\pa(Y)}
\sum_{i=1}^{M_k}
\frac{4c}{3(\Delta_k^i\wedge\epsilon)^2}$, and the extra samples from all the remaining variables 
\begin{align*}
    \sum_{\{\gamma\in G: \gamma \geq \Delta_{\min} \wedge \epsilon\}}
    \sum_{k\notin\pa(Y)} M_k \frac{c}{\gamma^2}
    &=
    \sum_{k\notin\pa(Y)} M_k c
    \sum_{\{\gamma\in G: \gamma \geq  \Delta_{\min} \wedge \epsilon\}}\frac{1}{\gamma^2}\\
    &\leq
    \sum_{k\notin\pa(Y)} M_k \frac{4c}{3(\Delta_{\min} \wedge \epsilon)^2}.
\end{align*}
Hence, the total sample complexity is 
\[
\frac{4c}{3}
\sum_{k\in\pa(Y)}
\sum_{i=1}^{M_k}
\frac{1}{(\Delta_k^i\wedge\epsilon)^2}
+
\sum_{k\notin\pa(Y)} \frac{M_k}{\left(\Delta_{\min} \wedge \frac{\epsilon}{K}\right)^2}
=
\frac{4c}{3}
\sum_{k=1}^K
\sum_{i=1}^{M_k}
\frac{1}{\left(\Delta_{\min} \wedge \Delta_k^i\wedge\frac{\epsilon}{K}\right)^2}.
\]
\end{proof}

\begin{proof}[Proof of Theorem~\ref{thm:recovering.parents}]
We need to prove two things: first, all parents are discovered, and second, no erroneous parents are included. Throughout, we condition of the event that the confidence intervals are all correct, which happens with probability at least $1-\delta$, and demonstrated in the proof of Theorem~\ref{thm:sample.complexity}. 

Recalling that $\hat\theta_k^i(\ell)$ and $\gamma(\ell)$ are the respective quantities at phase $\ell$ of the algorithm, we argue that non-parents are not added to $\hat\pa(Y)$. Consider some non-parent $k>\pa_Y$. The concentration inequality from Lemma~\ref{lem:azuma}, applied to $\x, \x'$ that only differ in that the first corresponds to $X_k=i$ and the second to $X_k=j$, yields 
\[
    \langle \hat\theta - \theta^*, e(\x)- e(\x')\rangle\leq\gamma
    \Rightarrow 
    \hat\theta_k^i - \theta_k^j\leq\gamma.
\]
Taken with the fact that $\theta_k^i - \theta_k^j = 0$  for all $i,j\in M_k$, we have $\hat\theta_k^i(\ell)- \hat\theta_k^j(\ell)\leq 2\gamma(\ell)$, so $k$ is not added to $\hat\pa(Y)$ on the event that the confidence intervals are collect.

To show that all parents are included, we consider two cases. First, assume that the algorithm does not terminate early so $\gamma_L = \epsilon / 2$. For every $k\leq \calP_Y$, the algorithm must have found the $i, i'$ satisfying $|f_k(i) - f_k(i')|\geq \epsilon_{\min} \geq \epsilon/2$, so $k$ in included in $\hat\pa_Y$. 

The second case is when the algorithm terminates early. In this case, there must be  $\overline \calP_Y$ variables with only one action remaining. These variables must be parents, since, with high probability no non-parents are added. 

Under the event that $\hat\pa(Y) = \pa(Y)$, Fact~\ref{fact2} implies that the expected response of actions $\hat\x$ and $\hat \x_{\hat\pa(Y)}$ are equal, completing the proof.
\end{proof}

\end{document}